\newcommand{\rom}[1]{%
  \textup{\uppercase\expandafter{\romannumeral#1}}%
}
\declaretheorem{claim}
\declaretheorem{theorem}
\declaretheorem{lemma}
\newtheorem*{theorem*}{Theorem}
\newcommand{\bx}{{\pmb x}}
\newcommand{\by}{{\pmb y}}
\newcommand{\bw}{{\pmb w}}
\newcommand{\be}{{\pmb e}}
\newcommand{\bu}{{\pmb u}}
\newcommand{\bv}{{\pmb v}}
\newcommand{\bdelta}{{\pmb \delta}}
\newcommand{\HH}{{\mathcal H}}
\let\oldphi\phi
\renewcommand\phi{\operatorname{\oldphi}}
\newcommand{\tr}{\mathsf{tr}}
\newcommand\defeq{\triangleq}
\newcommand\norm[1]{\left\lVert#1\right\rVert}
\DeclareMathAlphabet{\mathpzc}{OT1}{pzc}{m}{it}
\newcommand\iprod[2]{\langle #1, #2 \rangle}
\newcommand\expect[1]{\mathbb{E}{\left\lbrack#1\right\rbrack}}
\newcommand\expec[2]{\mathbb{E}_{#1}{\left\lbrack#2\right\rbrack}}
\newcommand\expecta[1]{\mathbb{E}{#1}}
\theoremstyle{definition}
\newtheorem{assumption}{Assumption}
\newcommand{\ip}[2]{\langle #1,#2\rangle}
\newcommand{\cL}{\mathcal{L}}
\newcommand{\ex}[2]{\underset{#1}{\mathbb{E}}\left[ #2 \right]}
\newcommand{\argmin}{\mathop{\mathrm{argmin}}}
\newcommand{\bP}{\mathbf{P}}
\newcommand{\bQ}{\mathbf{Q}}
\newcommand{\re}{\mathbb{R}}
\newcommand{\surround}[2][r]%
  {\ifstrequal{#1}{round}%
    {\left( #2 \right)}%
    {\ifstrequal{#1}{square}%
      {\left[ #2 \right]}%
      {\ifstrequal{#1}{curly}%
        {\left\{ #2 \right\}}%
        {\ifstrequal{#1}{angle}%
          {\left\langle #2 \right\rangle}%
          {\ifstrequal{#1}{|}%
            {\left\lvert #2 \right\rvert}%
            {\ifstrequal{#1}{||}%
              {\left\lVert #2 \right\rVert}%
              {\ifstrequal{#1}{floor}%
                {\left\lfloor #2 \right\rfloor}%
                {\ifstrequal{#1}{ceil}%
                  {\left\lceil #2 \right\rceil}%
                  {\ifstrequal{#1}{.}%
                    {\left. #2 \right.}%
                    {\left( #2 \right)}%
                  }%
                }%
              }%
            }%
          }%
        }%
      }%
    }%
  }
\title{The Power of Interpolation: 
Understanding the Effectiveness of SGD in Modern Over-parametrized Learning}
\author[]{Siyuan Ma}
\author[]{Raef Bassily}
\author[]{Mikhail Belkin}
\affil{Department of Computer Science and Engineering}
\affil{The Ohio State University}
\affil{
\textit{\{ma.588, bassily.1\}@osu.edu},
\textit{mbelkin@cse.ohio-state.edu}}
\date{}
\begin{document}

\maketitle
\begin{abstract}
Stochastic Gradient Descent (SGD) with small mini-batch is a key component in modern  large-scale machine learning. However, its efficiency has not been easy to analyze as most theoretical results require adaptive rates and show convergence rates far slower than that for gradient descent, making computational comparisons difficult.

In this paper we aim to formally explain the phenomenon of fast convergence of SGD observed in modern machine learning. The key observation is that most modern learning architectures are over-parametrized and are trained to interpolate the data by driving the empirical loss (classification and regression) close to zero. While it is still unclear why these interpolated solutions perform well on test data, we show that these regimes allow for fast convergence of SGD, comparable in number of iterations to full gradient descent.

For convex loss functions we obtain an exponential  convergence bound for {\it mini-batch} SGD  parallel to that for full gradient descent.
We show that there is a 
critical batch size $m^*$ such 
that:
\begin{itemize}
\item SGD iteration with mini-batch size $m\leq m^*$ is  nearly equivalent to $m$ iterations
of mini-batch size $1$ (\emph{linear scaling regime}).
\item  SGD iteration with mini-batch $m> m^*$ is nearly equivalent to a full gradient descent iteration (\emph{saturation regime}).
\end{itemize}

Moreover, for the quadratic loss, we derive explicit expressions for the optimal mini-batch and step size and explicitly characterize the two regimes above.
The critical mini-batch size can be viewed as the limit for effective mini-batch parallelization. It is also nearly independent of the data size, implying $O(n)$ acceleration over GD per unit of computation.
We give experimental evidence on real data which closely follows
our theoretical analyses.

Finally, we show how  our results fit in the recent developments in training deep neural networks and discuss connections to adaptive rates for SGD and variance reduction.
\end{abstract}



\section{Introduction}

Most machine learning techniques for supervised learning are based on Empirical Loss Minimization (ERM), i.e., minimizing the loss  $\cL(\bw)\triangleq \frac{1}{n}\sum_{i=1}^n\ell_i(\bw)$   over some parametrized space of functions $f_\bw$. Here $\ell_i(\bw) =L(f_\bw(\bx_i),y_i)$, where $(\bx_i,y_i)$ are the data and $L$ could, for example, be the square loss $L(f_\bw(\bx),y)= (f_\bw(\bx) - y)^2$.

In recent years, Stochastic Gradient Descent (SGD) with a small mini-batch size has become the backbone of machine learning, used in nearly all large-scale applications of machine learning methods, notably in conjunction with deep neural networks. Mini-batch SGD is a first order method which, instead of computing the full gradient of $\cL(\bw)$, computes 
the gradient with respect to a certain subset of the data points, often chosen sequentially. 
In practice small mini-batch SGD 
consistently  outperforms  full gradient descent (GD) by a large factor in terms of the computations required to achieve certain accuracy.  However, the theoretical evidence has been mixed. 
While SGD needs less computations per iteration, most analyses suggest that it requires  adaptive step sizes and has the rate of convergence that is far slower than that of GD, making computational efficiency comparisons difficult. 

In this paper, we explain the reasons for the effectiveness of SGD by taking a different perspective.  We note that most of modern machine learning, especially deep learning, relies on classifiers which are trained to achieve near zero 
classification and regression losses on the training data.  Indeed, the goal of achieving near-perfect fit on the training set is stated explicitly by the practitioners as a best practice in supervised learning\footnote{Potentially using regularization at a later stage, after a near-perfect fit is achieved.}, see, e.g., the tutorial~\cite{russ17}.  
The ability to achieve near-zero loss is provided by over-parametrization. The number of parameters for most deep architectures is very large and often exceeds by far the size of the datasets used for training (see, e.g.,~\cite{canziani2016analysis} for a summary of different architectures). There is significant theoretical and empirical evidence that in such over-parametrized systems most or all local minima are also global and hence correspond to the regime where the output of the learning algorithm matches the labels exactly \cite{gupta2015deep, chaudhari2016entropy, zhang2016understanding,huang2016densely,sagun2017empirical,bartlett2017spectrally}. 
Since continuous loss functions are typically used for training, the resulting function {\it interpolates} the data\footnote{Most of these architectures should be able to achieve perfect interpolation, $f_{\bw^*}(\bx_i)= y_i$. In practice, of course, it is not possible even for linear systems due to the computational and numerical limitations.}, i.e., $f_{\bw^*}(\bx_i)\approx y_i$. 

While we do not yet understand why these interpolated classifiers generalize so well to unseen data, there is ample empirical evidence for their excellent  generalization performance in deep neural networks~\cite{gupta2015deep, chaudhari2016entropy, zhang2016understanding,huang2016densely,sagun2017empirical}, kernel machines~\cite{belkin2018understand} and boosting~\cite{schapire1998}. 
In this paper we look at the significant computational implications of this startling phenomenon for  stochastic gradient descent. 

Our first key observation is that in the interpolated regime SGD with fixed step size converges exponentially fast for convex loss functions. The results showing exponential convergence of SGD when the optimal solution minimizes  the loss function at each point go back to the Kaczmarz method~\cite{kaczmarz1937angenaherte} for quadratic functions, more recently analyzed in~\cite{strohmer2009randomized}. For the general convex case, it was first proved in ~\cite{moulines2011non}. The rate was later improved in~\cite{needell2014stochastic}.
However, to the best of our knowledge, exponential 
convergence in that regime has not been connected to over-parametrization and interpolation in modern machine learning. Still, exponential convergence by itself does not allow us to make  any comparisons between the computational efficiency of SGD with different mini-batch sizes and full gradient descent, as the existing results do not depend on the mini-batch size $m$. This dependence is crucial for understanding SGD, as small mini-batch SGD seems to dramatically outperform full gradient descent in nearly all applications.  Motivated by this, in this paper we provide an explanation for the empirically observed efficiency of small mini-batch SGD.  We provide a detailed analysis for the rates of convergence and computational efficiency for different mini-batch sizes and a discussion of its implications in the context of modern machine learning. 

We first analyze convergence of mini-batch SGD for convex loss functions as a function of the batch size $m$. We show that
there is a critical mini-batch size $m^*$ that is nearly independent on $n$, such that the following holds:
\begin{enumerate}
\item  
(linear scaling) One  SGD iteration with mini-batch of size $m\le m^*$ is  equivalent to $m$ iterations  of mini-batch of size one up to a multiplicative constant close to $1$.
\item
(saturation) One SGD iterations with a mini-batch of size $m > m^*$ is nearly (up to a small constant)  as effective as one iteration of full gradient descent.
\end{enumerate}

We see that the critical mini-batch size $m^*$ can be viewed as the limit for the effective parallelization of  mini-batch computations. If an iteration with mini-batch of size $m\le m^*$ can be computed in parallel, it is nearly equivalent to $m$ sequential steps with mini-batch of size $1$. For $m>m^*$ parallel computation has limited added value. 

\begin{wrapfigure}{r}{0.5\textwidth}
  \centering
  \includegraphics[width=0.5\textwidth]{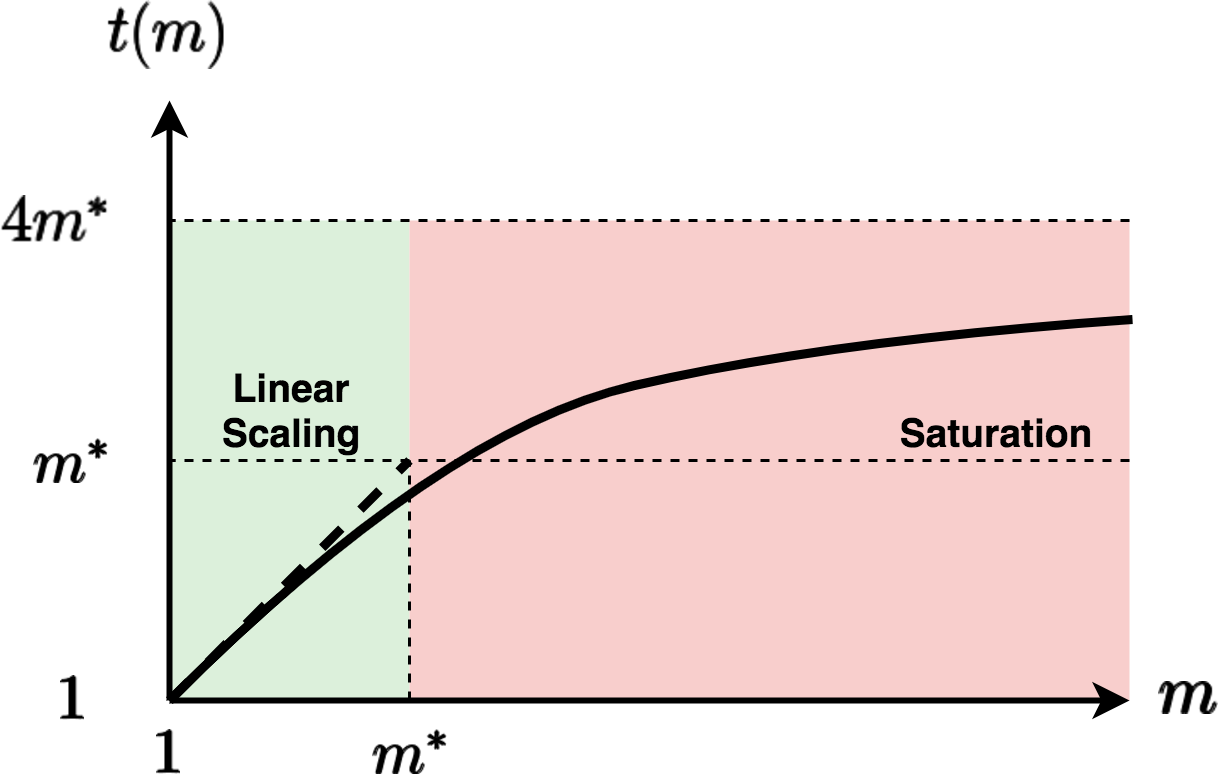}
	\caption{$t(m)$ iterations with batch size $1$ (the $y$ axis) equivalent to one iteration with batch size $m$ (the $x$ axis) for convergence.}
  \label{fig:optimal-region-0}
  \vspace{-5mm}
\end{wrapfigure}

Next, for the quadratic loss function, we obtain a sharp characterization of these regimes based on an explicit derivation of optimal step size as a function of $m$. In particular, in this case we show that the critical mini-batch size is given by 
$m^*= \frac{\max_{i=1}^n\{\norm{\bx_i}^2\}}{\lambda_1 (H)}$
, where $H$ is the Hessian at the minimizer and $\lambda_1$ is its spectral norm. 

Our result shows that $m^*$ is nearly independent of the data size $n$ (depending only on the properties of the Hessian). Thus SGD with mini-batch size $m^*$ (typically a small constant)  gives essentially the same convergence per iteration as full gradient descent, implying acceleration by a factor of  $O(n)$  over GD per unit of computation. 

We also show that a mini-batch of size one is optimal in terms of computations required to achieve a given error. 
Our theoretical results are based on upper bounds which we show to be tight in the quadratic case and nearly tight in the general convex case.


There have been work on understanding the interplay between the mini-batch size and computational efficiency, including~\cite{takac2013mini,li2014efficient,yin2018gradient} in the standard non-interpolated regime. However, in that setting the issue of bridging the exponential convergence of full GD and the much slower convergence rates of mini-batch SGD is harder to resolve, requiring extra components, such as tail averaging~\cite{jain2016parallelizing} (for quadratic loss).

We provide experimental evidence corroborating this on real data.  In particular, we demonstrate the regimes of linear scaling and saturation and also  show
that on real data $m^*$ is in line with our estimate. It is typically several orders of magnitude smaller than the data size $n$ implying a computational advantage of at least $10^3$ factor over full gradient descent in realistic scenarios in the over-parametrized (or fully parametrized) setting.  
We believe this sheds light on the impressive effectiveness of SGD observed in many real-world situation and is the reason why full gradient descent is rarely, if ever, used. In particular, the
``linear scaling rule'' recently used in deep convolutional networks~\cite{krizhevsky2014one, goyal2017accurate, you2017scaling, smith2017don} is consistent with our theoretical analyses.

The rest of the paper is organized as follows:

In Section~\ref{sec:warmup}, we analyze the fast convergence of mini-batch SGD and discuss some implications for the variance reduction techniques. It turns out that in the interpolated regime, simple SGD with constant step size is equally or more effective than the more complex variance reduction methods. 

Section~\ref{sec:fast} contains the analysis of the special case of quadratic losses, where we obtain optimal convergence rates of mini-batch SGD, and derive the optimal step size as a function of the mini-batch size. We also analyze the computational efficiency as a function of the mini-batch size.

In Section~\ref{sec:expr} we provide experimental evidence using several datasets. We show that the experimental results correspond closely to the behavior predicted by our bounds. We also briefly discuss the connection to the linear scaling rule in neural networks.

\section{Preliminaries}

Before we start our technical discussion, we briefly overview some standard notions in convex analysis. Here, we will focus on differentiable convex functions, however, the definitions below extend to general functions simply by replacing the gradient of the function at a given point by the set of all sub-gradients at that point. In fact, since in this paper we only consider smooth functions, differentiability is directly implied.

\begin{itemize}

\item A differentiable function $\ell:\mathbb{R}^d\rightarrow \mathbb{R}$ is \emph{convex} on $\mathbb{R}^d$ if, for all $\bw, \bv\in
  \mathbb{R}^d$, we have $\ell(\bv) \geq \ell(\bw) + \ip{\nabla \ell(\bw)}{\bv-\bw}$.

\item   Let $\beta >0$. A differentiable function $\ell:\mathbb{R}^d\rightarrow \mathbb{R}$ is \emph{$\beta$-smooth} on $\mathbb{R}^d$ if, for all $\bw, \bv\in \mathbb{R}^d$, we have  
  $\ell(\bv) \leq \ell(\bw) + \ip{ \nabla \ell(\bw) }{\bv-\bw} + \frac{\beta}{2}\norm{\bv-\bw}^2,$
 where $\nabla \ell(\bw)$ denotes the gradient of $\ell$ at $\bw$.
\item  Let $\alpha>0$. A differentiable function $\ell:\mathbb{R}^d\rightarrow \mathbb{R}$ is \emph{$\alpha$-strongly convex} on $\mathbb{R}^d$ if, for all $\bw, \bv\in
  \mathbb{R}^d$, we have $\ell(\bv) \geq \ell(\bw) + \ip{\nabla \ell(\bw)}{\bv-\bw} +
  \frac{\alpha}{2}\norm{\bv-\bw}^2$.
(Clearly, for any $\alpha \geq 0$, $\alpha$-strong convexity implies convexity).
\end{itemize}

The problem of unconstrained Empirical Risk Minimization (ERM) can be described as follows: Given a set of $n$ loss functions $\ell_i:\mathbb{R}^d\rightarrow \mathbb{R},~ i\in \{1, \ldots, n\}$, the goal is to minimize the empirical loss function defined as 
$$\cL(\bw)\triangleq \frac{1}{n}\sum_{i=1}^n \ell_i(\bw),~ \bw\in\mathbb{R}^d.$$ In particular, we want to find a minimizer $\bw^*\triangleq \arg\min_{\bw\in\mathbb{R}^d} \cL(\bw)$. In the context of supervised learning, given a training set $\{(\bx_i, y_i): 1\leq i\leq n\}$ of $n$ (feature vector, target) pairs, one can think of $\ell_i(\bw)$ as the cost incurred in choosing a parameter vector $\bw$ to fit the data point $(\bx_i, y_i)$. In particular, in this context, minimizing $\cL$ over $\bw\in\mathbb{R}^d$ is equivalent to minimizing $\cL$ over a parameterized space of functions $\{f_\bw: \bw\in\mathbb{R}^d\}$, where each $f_\bw$ maps a feature vector $\bx$ to a target $y$. Thus, in this case, for each $i$, $\ell_i(\bw)$ can be written as $L(f_\bw(\bx_i), y_i)$ where $L$ is some cost function that represents how far is $f_\bw(\bx_i)$ from $y_i$, for example, $L(\cdot, \cdot)$ could be the squared loss $L(f_\bw(\bx), y)=\left(f_\bw(\bx)- y\right)^2$. 

\section{Interpolation and Fast SGD: Convex Loss}
\label{sec:warmup}

We consider a standard setting of ERM where for all $1\leq i\leq n$, $\ell_i$ is non-negative, $\beta$-smooth and convex. Moreover, $\cL(\bw)= \frac{1}{n}\sum_{i=1}^n\ell_i(\bw)$ is 
$\lambda$-smooth and
$\alpha$-strongly convex.
It is easy to see that $\beta \ge \lambda$. %
This setting is naturally satisfied in many problems, e.g., in least-squares linear regression with full rank sample covariance matrix. 

Next, we state our key assumption in this work. This assumption describes the interpolation setting, which is aligned with what we usually observe in over-parametrized settings in modern machine learning.

\begin{assumption}[Interpolation]\label{assump:interp} 
Let $\bw^* \in \argmin_{\bw\in\re^d}\cL(\bw)$. Then, for all $1\leq i\leq n$, \mbox{$\ell_i(\bw^*)=0$.}
\end{assumption}

Note that instead of assuming that $\ell_i(\bw^*)=0$, it suffices to assume that $\bw^*$ is  the minimizer of all $\ell_i$.  By subtracting from each $\ell_i$ the offset $\ell_i(\bw^*)$, we get an equivalent minimization problem where the new losses are all non-negative, and are all zero at $\bw^*$.

Consider the SGD algorithm that starts at an arbitrary $\bw_0\in\re^d$, and at each iteration $t$ makes an update with a constant step size $\eta$:
\begin{align}
\bw_{t+1}&=\bw_t -\eta \cdot \nabla \left\{ \frac{1}{m}\sum_{j=1}^m  \ell_{i_t^{(j)}}(\bw_t)
\right\}
\label{main-update-step}
\end{align}
where $m$ is the size of a mini-batch of data points whose indices $\{i_t^{(1)}, \ldots, i_t^{(m)}\}$ are drawn uniformly with replacement at each iteration $t$ from $\{1, \ldots, n\}$.

The theorem below shows exponential convergence for mini-batch SGD in the interpolated regime. 

\begin{theorem}\label{thm:convex}
For the setting described above and under Assumption~\ref{assump:interp}, for any mini-batch size $m \in \mathbb{N}$, the SGD iteration~(\ref{main-update-step}) with constant step size
$\eta^*(m) \defeq \frac{m}{\beta + \lambda (m - 1)}$
gives the following guarantee
\begin{equation}\label{eq:theorem1}
\begin{split}
\ex{\bw_t}{\cL(\bw_t)}
&\leq \frac{\lambda}{2}
(1 - \eta^*(m) \cdot \alpha)^t
\norm{\bw_{0} - \bw^*}^2
\end{split}
\end{equation}
\end{theorem}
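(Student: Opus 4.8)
The plan is to track the expected squared distance to the optimum, $\ex{}{\norm{\bw_t-\bw^*}^2}$, derive a one-step contraction, and only at the very end convert back to a bound on $\cL$ via smoothness. Write $\bg_t = \frac{1}{m}\sum_{j=1}^m \nabla \ell_{i_t^{(j)}}(\bw_t)$ for the mini-batch gradient. Since the indices are drawn uniformly with replacement, conditioning on $\bw_t$ gives $\ex{}{\bg_t \mid \bw_t} = \nabla \cL(\bw_t)$. Expanding $\norm{\bw_{t+1}-\bw^*}^2 = \norm{\bw_t-\bw^*-\eta\bg_t}^2$ and taking the conditional expectation yields
\[
\ex{}{\norm{\bw_{t+1}-\bw^*}^2 \mid \bw_t} = \norm{\bw_t-\bw^*}^2 - 2\eta\,\ip{\nabla\cL(\bw_t)}{\bw_t-\bw^*} + \eta^2\,\ex{}{\norm{\bg_t}^2 \mid \bw_t}.
\]
Strong convexity of $\cL$ together with $\cL(\bw^*)=0$ bounds the cross term below by $\cL(\bw_t) + \frac{\alpha}{2}\norm{\bw_t-\bw^*}^2$, which supplies the contraction factor $(1-\eta\alpha)$ and an extra $-2\eta\,\cL(\bw_t)$.

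The crux is the second moment $\ex{}{\norm{\bg_t}^2 \mid \bw_t}$. First I would exploit the independence of the $m$ sampled indices to split the double sum into diagonal and off-diagonal parts, obtaining
\[
\ex{}{\norm{\bg_t}^2 \mid \bw_t} = \frac{1}{m}\cdot\frac{1}{n}\sum_{i=1}^n \norm{\nabla\ell_i(\bw_t)}^2 + \frac{m-1}{m}\norm{\nabla\cL(\bw_t)}^2,
\]
where the off-diagonal terms collapse to $\norm{\nabla\cL(\bw_t)}^2$ because distinct draws are independent with mean $\nabla\cL(\bw_t)$. The next ingredient is the self-bounding inequality for non-negative smooth functions: since each $\ell_i\ge 0$ is $\beta$-smooth and, by interpolation, is globally minimized at $\bw^*$ with $\ell_i(\bw^*)=0$, we have $\norm{\nabla\ell_i(\bw)}^2 \le 2\beta\,\ell_i(\bw)$, and likewise $\norm{\nabla\cL(\bw)}^2 \le 2\lambda\,\cL(\bw)$ from $\lambda$-smoothness. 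Averaging the first bound gives $\frac{1}{n}\sum_i \norm{\nabla\ell_i(\bw_t)}^2 \le 2\beta\,\cL(\bw_t)$, so altogether $\ex{}{\norm{\bg_t}^2 \mid \bw_t} \le \frac{2}{m}\big(\beta+\lambda(m-1)\big)\cL(\bw_t)$.

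Substituting both bounds back leaves
\[
\ex{}{\norm{\bw_{t+1}-\bw^*}^2 \mid \bw_t} \le (1-\eta\alpha)\norm{\bw_t-\bw^*}^2 - 2\eta\Big(1 - \eta\,\tfrac{\beta+\lambda(m-1)}{m}\Big)\cL(\bw_t).
\]
The decisive observation is that the prescribed step size $\eta^*(m)=\frac{m}{\beta+\lambda(m-1)}$ is chosen precisely so that the parenthetical coefficient vanishes; the non-negative $\cL(\bw_t)$ term then drops out entirely, leaving the clean contraction $\ex{}{\norm{\bw_{t+1}-\bw^*}^2 \mid \bw_t} \le (1-\eta^*\alpha)\norm{\bw_t-\bw^*}^2$. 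Taking total expectations, iterating $t$ times, and finally applying $\lambda$-smoothness of $\cL$ at $\bw^*$ (where $\nabla\cL(\bw^*)=0$ and $\cL(\bw^*)=0$) to get $\cL(\bw_t)\le \frac{\lambda}{2}\norm{\bw_t-\bw^*}^2$ delivers the stated bound.

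The main obstacle I anticipate is pinning down the second-moment estimate correctly. Both the with-replacement independence bookkeeping — which produces the $\frac{1}{m}$ and $\frac{m-1}{m}$ weighting that ultimately drives the critical batch size — and the self-bounding lemma must be handled with care, since the interpolation assumption enters in two essential places: it guarantees $\nabla\ell_i(\bw^*)=0$ so the stochastic-gradient variance vanishes at the optimum, and it makes $\cL(\bw^*)=0$ so that the cross-term bound and the final conversion are exact rather than carrying an additive residual that would destroy the exponential rate.
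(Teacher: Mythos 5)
Your proposal is correct and follows essentially the same route as the paper's proof: the same with-replacement second-moment decomposition into $\frac{1}{m}$ single-sample and $\frac{m-1}{m}$ full-gradient parts, the same self-bounding inequalities $\norm{\nabla \ell_i(\bw)}^2 \leq 2\beta\,\ell_i(\bw)$ and $\norm{\nabla \cL(\bw)}^2 \leq 2\lambda\,\cL(\bw)$ from smoothness plus non-negativity, and the same strong-convexity bound on the cross term. The only difference is cosmetic: the paper splits $\cL$ with a parameter $p$, verifies non-negativity of the residual for each $\eta(p)$, and then optimizes over $p$ to derive $\eta^*(m)$, whereas you bound the second moment by $\frac{2(\beta+\lambda(m-1))}{m}\cL(\bw_t)$ directly and observe that $\eta^*(m)$ makes the residual coefficient vanish -- algebraically the same computation.
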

\begin{proof}
By the $\lambda$-smoothness of $\cL$, we have 
\begin{equation}\label{eq:t1-smooth}
\cL(\bw_t)\leq \cL(\bw^*)+\ip{\nabla \cL(\bw^*)}{\bw_t-\bw^*}+\frac{\lambda}{2}\norm{\bw_t-\bw^*}^2=\frac{\lambda}{2}\norm{\bw_t-\bw^*}^2
\end{equation}
Then we prove inequality~(\ref{eq:theorem1}) by showing that
\begin{equation*}
\ex{\bw_t}{\norm{\bw_{t} - \bw^*}^2}
\leq
(1 - \eta^*(m) \cdot \alpha) \norm{\bw_{t-1} - \bw^*}^2
\end{equation*}

For simplicity, we first rewrite the SGD update~(\ref{main-update-step}) with error
$\bdelta_t \defeq \bw_t - \bw^*$
and mini-batch empirical loss
$L_{m,t}(\bw) \defeq \frac{1}{m}
\sum_{i=1}^m \ell_{t_i}(\bw)$,
$$
\bdelta_{t} = \bdelta_{t-1} - \eta \nabla L_{m,t}(\bw_{t-1})
$$

As the key of this proof is to upper bound
$\expecta{\norm{\bdelta_{t}}^2}$
with $\expecta{\norm{\bdelta_{t-1}}^2}$,
we start by expanding $\expecta{\norm{\bdelta_{t}}^2}$ using the above iteration.
\begin{equation}\label{eq:convex-1}
\expec{t_1, \ldots, t_m}{\norm{\bdelta_{t}}^2}
= \expec{t_1, \ldots, t_m}
{\norm{\bdelta_{t-1}}^2
- 2\eta \iprod{\bdelta_{t-1}}{\nabla L_{m,t}(\bw_{t-1})}
+ \eta^2 \norm{\nabla L_{m,t}(\bw_{t-1})}^2 }
\end{equation}

Applying the expectation to the inner product and using the $\alpha$-strong convexity of $\cL$, we have
\begin{equation}\label{eq:convex-2}
\expec{t_1, \ldots, t_m}{\iprod{\bdelta_{t-1}}{\nabla L_{m,t}(\bw_{t-1})}}
= \iprod{\bdelta_{t-1}}{\nabla \cL(\bw_{t-1})}
\geq \cL(\bw_{t-1}) + \frac{\alpha}{2}
\norm{\bdelta_{t-1}}^2
\end{equation}

By (\ref{eq:convex-1}) and (\ref{eq:convex-2}), we see that
\begin{equation}\label{eq:convex-3}
\expec{t_1, \ldots, t_m}{\norm{\bdelta_{t}}^2}
\leq
(1 - \eta \alpha) \norm{\bdelta_{t-1}}^2
- 2 \eta \cdot \expec{t_1, \ldots, t_m}
{\cL(\bw_{t-1}) - \frac{\eta}{2} \norm{\nabla L_{m,t}(\bw_{t-1})}^2}
\end{equation}

Next, we choose $\eta$ such that
$\expec{t_1, \ldots, t_m}
{\cL(\bw_{t-1}) - \frac{\eta}{2} \norm{\nabla L_{m,t}(\bw_{t-1})}^2} \geq 0$. We start by giving an important expansion,
\begin{equation}\label{eq:mb-norm}
\begin{split}
&\expec{t_1, \ldots, t_m}{\norm{\nabla L_{m,t}(\bw_{t-1})}^2}\\
&= 
\expec{t_1, \ldots, t_m}{\iprod{\frac{1}{m}
\sum_{i=1}^m \nabla \ell_{t_i}(\bw_{t-1})}{\frac{1}{m}
\sum_{i=1}^m \nabla \ell_{t_i}(\bw_{t-1})}}\\
&=
\frac{1}{m^2} \left\lbrace
\sum_{i=1}^m \expec{t_i}{\norm{\nabla \ell_{t_i} (\bw_{t-1})}^2}
+ \sum_{i=1}^m \sum_{j = 1 (j\neq i)}^m
\expec{t_i, t_j}{\iprod{\nabla \ell_{t_i} (\bw_{t-1})} {\nabla \ell_{t_j} (\bw_{t-1})}}
\right\rbrace\\
&=
\frac{1}{m} \expec{t_1}{\norm{\nabla L_{1,t}(\bw_{t-1})}^2} + \frac{m-1}{m} \norm{\nabla \cL(\bw_{t-1})}^2
\end{split}
\end{equation}

Then recalling the $\beta$-smoothness of $L_{1,t}(\bw) = \ell_{t_1}(\bw)$ and $\lambda$-smoothness of $\cL(\bw)$, we have
\begin{equation}\label{eq:smooth}
\begin{split}
&L_{1,t}(\bw) - \frac{1}{2\beta} \norm{\nabla L_{1,t}(\bw)}^2 \geq 0\\
&\cL(\bw) - \frac{1}{2 \lambda} 
\norm{\nabla \cL(\bw)}^2 \geq 0
\end{split}
\end{equation}

From (\ref{eq:mb-norm}) and (\ref{eq:smooth}), it is easy to see that for any $p \in [0, 1]$ when choosing $\eta(p) \defeq \min\{ \frac{p\cdot m}{\beta}, \frac{1 - p}{\lambda} \cdot \frac{m}{m-1} \}$, we have

\begin{equation*}\label{eq:convex-main}
\begin{split}
&\expec{t_1, \ldots, t_m}
{\cL(\bw_{t-1}) - \frac{\eta(p)}{2} \norm{\nabla L_{m,t}(\bw_{t-1})}^2}\\
& =
\expec{t_1}{p \cdot L_{1,t}(\bw_{t-1}) - \frac{\eta(p)}{2}\cdot \frac{1}{m} \norm{\nabla L_{1,t}(\bw_{t-1})}^2}
+ \{(1-p) \cdot \cL(\bw_{t-1}) - \frac{\eta(p)}{2}\cdot
\frac{m-1}{m}\norm{\nabla \cL(\bw_{t-1})}^2 \}\\
&\geq
p \cdot \expec{t_1}{ L_{1,t}(\bw_{t-1})
- \frac{1}{2\beta} \norm{\nabla L_{1,t}(\bw_{t-1})}^2}
+ (1-p) \{\cL(\bw_{t-1})
- \frac{1}{2 \lambda}
\norm{\nabla \cL(\bw_{t-1})}^2 \}
\geq 0
\end{split}
\end{equation*}

By dropping this term in (\ref{eq:convex-3}), we obtain
$$
\expec{t_1, \ldots, t_m}{\norm{\bdelta_{t}}^2}
\leq
(1 - \eta(p) \cdot \alpha) \norm{\bdelta_{t-1}}^2
$$

We see that for $p\in [0, 1]$, $1 - \eta(p) \cdot \alpha$ reaches its minimum (for fastest convergence) when $p = \frac{\beta}{\beta + \lambda (m - 1)}$.
Thus we choose $\eta^*(m) = \frac{m}{\beta + \lambda (m - 1)}$ corresponding to the best $p$ and obtain,
$$
\expec{t_1, \ldots, t_m}{\norm{\bdelta_{t}}^2}
\leq
(1 - \eta^*(m) \cdot \alpha) \norm{\bdelta_{t-1}}^2
$$

Incorporating this result with inequality~(\ref{eq:t1-smooth}), we have
\begin{equation*}
\begin{split}
\ex{\bw_t}{\cL(\bw_t)}
&\leq \frac{\lambda}{2} \ex{\bw_t}{\norm{\bw_t-\bw^*}^2}\\
&\leq \frac{\lambda}{2}
(1 - \eta^*(m) \cdot \alpha)
\ex{\bw_{t-1}}{\norm{\bw_{t-1}-\bw^*}^2}\\
&\leq \frac{\lambda}{2}
(1 - \eta^*(m) \cdot \alpha)^t
\norm{\bw_{0} - \bw^*}^2
\end{split}
\end{equation*}
\end{proof}

For $m=1$, this theorem is a special case of Theorem~2.1 in~\cite{needell2014stochastic}, which is a sharper version of Theorem~1 in~\cite{moulines2011non}.

\textbf{Speedup factor.} Let $t(m)$ be the number of iterations needed to reach a desired accuracy with batch size $m$.
Assuming $\lambda \gg \alpha$, the speed up factor $\frac{t(1)}{t(m)}$, which measures the number of iterations saved by using larger batch, is
$$
\frac{t(1)}{t(m)} 
= \frac{\log(1 - \eta^*(m)\alpha)}{\log(1 - \eta^*(1)\alpha)}
\approx \frac{\eta^*(m)}{\eta^*(1)}
= \frac{m\beta}{\beta + \lambda(m-1)}
$$

\textbf{Critical batch size}
$m^* \defeq \frac{\beta}{\lambda} + 1$ . By estimating the speedup factor for each batch size $m$, we directly obtain
\begin{itemize}
\item Linear scaling regime: one iteration of batch size $m \leq m^*$ is nearly equivalent to $m$ iterations of batch size $1$.
\item Saturation regime:
one iteration with batch size $m > m^*$ is nearly equivalent to one full gradient iteration.
\end{itemize}
We give a sharper analysis for the case of quadratic loss in Section~\ref{sec:fast}.

\subsection{Variance reduction methods in the interpolation regime}\label{sec:svrg}

For general convex optimization, a set of important stochastic methods~\cite{roux2012stochastic, johnson2013accelerating, defazio2014saga, xiao2014proximal,allen2016katyusha} have been proposed to achieve exponential (linear) convergence rate with constant step size.
The effectiveness of these methods derives from their ability to reduce the stochastic variance caused by sampling. In a general convex setting, this variance prevents SGD from both adopting a constant step size and achieving an exponential convergence rate.

\begin{wraptable}{r}{0.5\linewidth}
\centering
\begin{adjustbox}{center}
\resizebox{\linewidth}{!}{
\label{tbl:vr-conv}
\begin{tabular}{|c||c|c|l|}
\hline
\multirow{2}{*}{Method} & \multirow{2}{*}{Step size} & \multicolumn{2}{c|}{\multirow{2}{*}{\begin{tabular}[c]{@{}c@{}}\#Iterations to \\ reach a given error\end{tabular}}} \\
 &  & \multicolumn{2}{c|}{} \\ \hline
{\bf Mini-batch SGD}~(Theorem~\ref{thm:convex}) & 
$\frac{m}{\beta + \lambda (m - 1)}$ & \multicolumn{2}{c|}{$O(\frac{\beta + \lambda(m-1)}{m \alpha})$} \\ \hline
SGD (Eq.~\ref{eq:sgd}, m=1) & 
$\frac{1}{\beta}$ & \multicolumn{2}{c|}{$O(\frac{\beta}{\alpha})$} \\ \hline
SAG~\cite{roux2012stochastic} & $\frac{1}{2 n \cdot \beta}$ & \multicolumn{2}{c|}{$O(\frac{n \cdot \beta}{\alpha})$} \\ \hline
SVRG~\cite{johnson2013accelerating} & $\frac{1}{10 \beta}$ & \multicolumn{2}{c|}{$O(n + \frac{\beta}{\alpha})$} \\ \hline
SAGA~\cite{defazio2014saga} & $\frac{1}{3 \beta}$ & \multicolumn{2}{c|}{$O(n + \frac{\beta}{\alpha})$} \\ \hline
Katyusha~\cite{allen2016katyusha} (momentum) & adaptive & \multicolumn{2}{c|}{$O(n + \sqrt{\frac{n \cdot \beta}{\alpha}})$} \\ \hline
\end{tabular}
}
\end{adjustbox}
\vspace{-2mm}
\end{wraptable}
Remarkably, in the interpolated regime, Theorem~\ref{thm:convex} implies that SGD obtains the benefits of variance reduction ``for free" without the need for any modification or extra information (e.g., full gradient computations for variance reduction). 
The  table on the right compares the convergence of SGD in the interpolation setting with several popular variance reduction methods.
Overall, SGD has the largest step size and achieves the fastest convergence rate without the need for any further assumptions. The only comparable or faster rate is given by Katyusha, which is an accelerated SGD method combining momentum and variance reduction for faster convergence.

\section{How Fast is Fast SGD: Analysis of Step, Mini-batch Sizes and Computational Efficiency for Quadratic Loss}
\label{sec:fast}

In this section, 
we analyze the convergence of mini-batch SGD for quadratic losses. 
We will consider the following key questions:

\vspace{-1em}
\begin{itemize} 
\setlength{\itemsep}{-0.5em}
\item What is the optimal convergence rate of mini-batch SGD and the corresponding step size as a function of $m$ (size of mini-batch)?
\item What is the computational efficiency of different batch sizes and how do they compare to full GD?
\end{itemize}
\vspace{-1em}


The case of quadratic losses covers over-parametrized linear or kernel regression with a positive definite kernel. The quadratic case also captures general smooth convex functions in the neighborhood of a minimum where higher order terms can be ignored.

\paragraph{Quadratic loss.} Consider the problem of minimizing the sum of squares
\begin{align}
\cL(\bw) \triangleq \frac{1}{n} \sum_{i=1}^n (\bw^T \bx_i - y_i)^2 \nonumber
\end{align}
where $(\bx_i, y_i) \in \HH \times \mathbb{R}, i = 1, \ldots, n$ are labeled data points sampled from some (unknown) distribution.
In the interpolation setting, there exists $\bw^* \in \HH$ such that $L(\bw^*) = 0$. The covariance $H \triangleq \frac{1}{n} \sum_{i=1}^n {\bx_i \bx_i^T}$ can be expressed in terms of its eigen decomposition as $\sum_{i=1}^d \lambda_i \be_i \be_i^T$, where $d$ is the dimensionality of the parameter space (and the feature space) $\HH$, $\lambda_1\geq \lambda_2 \geq \cdots \geq\lambda_d$ are the eigenvalues of $H$, and $\{\be_1, \ldots, \be_d\}$ is the eigen-basis induced by $H$. In the over-parametrized setting (i.e., when $d>n$), the rank of $H$ is at most $n$. Assume, w.o.l.g., that the eigenvalues are such that $\lambda_1\geq\lambda_2\geq \cdots\geq\lambda_k>0 =\lambda_{k+1}=\cdots =\lambda_d$ for some $k\leq n$. We further assume that for all feature vectors \mbox{$\bx_i, i=1, \ldots, n$,} we have \mbox{$\norm{\bx_i}^2 \leq \beta$.} Note that this implies that the trace of $H$ is bounded from above by $\beta$, that is, \mbox{$\tr(H)\leq \beta$.} Thus, we have $\beta > \lambda_1 \geq \lambda_2 \geq \cdots \geq\lambda_k>0$. Hence, in the interpolation setting, we can write the sum of squares $\cL(\bw)$ as 
\begin{align}
\cL(\bw) &=(\bw-\bw^*)^T H (\bw- \bw^*)\label{eq:emp-quad-loss}
\end{align}

For any $\bv\in \HH$, let $\bP_\bv$ denote the projection of $\bv$ unto the subspace spanned by $\{\be_1, \ldots, \be_k\}$ and $\bQ_{\bv}$ denote the projection of $\bv$ unto the subspace spanned by $\{\be_{k+1}, \ldots, \be_{d}\}$. That is, $\bv=\bP_{\bv}+\bQ_{\bv}$ is the decomposition of $\bv$ into two orthogonal components: its projection onto $\mathsf{Range}(H)$ (i.e., the range space of $H$, which is the subspace spanned by $\{\be_1, \ldots, \be_k\}$) and its projection onto $\mathsf{Null}(H)$ (i.e., the null space of $H$, which is the subspace spanned by $\{\be_{k+1}, \ldots, \be_d\}$). Hence, the above quadratic loss can be written as

\begin{align}
\cL(\bw) &=\bP_{\bw-\bw^*}^T ~H ~\bP_{\bw- \bw^*} \label{eq:quad-loss-equiv}
\end{align}

To minimize the loss in this setting, consider the following SGD update with mini-batch of size $m$ and step size $\eta$:
\begin{equation}\label{eq:sgd}
\bw_{t+1} = \bw_t - \eta H_m (\bw_t - \bw^*)
\end{equation}
where $H_m \triangleq \frac{1}{m} \sum_{i=1}^m \tilde{\bx}_i \tilde{\bx}_i^T$ is a subsample covariance corresponding to a subsample of feature vectors $\{\tilde{\bx}_1, \ldots, \tilde{\bx}_m\}\subset \{\bx_1, \ldots, \bx_n\}$. 

Let $\bdelta_t\triangleq \bw_t-\bw^*$. Observe that we can write (\ref{eq:sgd}) as 

\begin{align}
\bP_{\bdelta_{t+1}}+ \bQ_{\bdelta_{t+1}}&= \bP_{\bdelta_t}+\bQ_{\bdelta_t} - \eta H_m \left(\bP_{\bdelta_t}+\bQ_{\bdelta_t}\right)\label{eq:sgd-decomp}
\end{align}

Now, we make the following simple claim (whose proof is given in the appendix).

\begin{claim}\label{claim:eigen-decomp}
Let $\bu\in \HH$. For any subsample $\{\tilde{\bx}_1,\ldots, \tilde{\bx}_m\}\subset \{\bx_1, \ldots, \bx_n\},$ let $H_m=\frac{1}{m} \sum_{i=1}^m \tilde{\bx}_i \tilde{\bx}_i^T$ be the corresponding subsample covariance matrix. Then, 
\begin{align}
H_m\bu &\in\mathsf{Range}(H)=\mathsf{Span}\{\be_1, \ldots, \be_k\}.\nonumber
\end{align}
This also implies that for any $\bv\in\mathsf{Null}(H)=\mathsf{Span}\{\be_{k+1}, \ldots, \be_{d}\},$ we must have $H_m\bv=0$.
\end{claim}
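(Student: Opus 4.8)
The plan is to reduce everything to a single linear-algebra fact: that the range of the full covariance matrix coincides with the span of the data vectors, i.e. $\mathsf{Range}(H)=\mathsf{Span}\{\bx_1,\ldots,\bx_n\}$. Granting this, the first assertion is almost immediate. For any $\bu\in\HH$ we may write $H_m\bu=\frac{1}{m}\sum_{i=1}^m(\tilde{\bx}_i^T\bu)\,\tilde{\bx}_i$, which is a linear combination of the subsampled vectors $\tilde{\bx}_1,\ldots,\tilde{\bx}_m$. Since each $\tilde{\bx}_i$ is one of the $\bx_1,\ldots,\bx_n$, the vector $H_m\bu$ lies in $\mathsf{Span}\{\bx_1,\ldots,\bx_n\}=\mathsf{Range}(H)=\mathsf{Span}\{\be_1,\ldots,\be_k\}$, which is exactly what is claimed.

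To establish the range identity I would use that $H$ is symmetric and positive semidefinite, so that $\mathsf{Range}(H)=\mathsf{Null}(H)^\perp$. The inclusion $\mathsf{Range}(H)\subseteq\mathsf{Span}\{\bx_1,\ldots,\bx_n\}$ is direct from the expansion $H\bu=\frac{1}{n}\sum_{i=1}^n(\bx_i^T\bu)\,\bx_i$. For the reverse inclusion I would show $\mathsf{Null}(H)\subseteq(\mathsf{Span}\{\bx_1,\ldots,\bx_n\})^\perp$: if $H\bv=0$ then $0=\bv^T H\bv=\frac{1}{n}\sum_{i=1}^n(\bx_i^T\bv)^2$, which forces $\bx_i^T\bv=0$ for every $i$, i.e. $\bv\perp\bx_i$ for all $i$. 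Taking orthogonal complements of both sides of this inclusion yields $\mathsf{Span}\{\bx_1,\ldots,\bx_n\}\subseteq\mathsf{Null}(H)^\perp=\mathsf{Range}(H)$, and combining the two inclusions gives the identity.

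For the second statement I would argue directly rather than dualize. If $\bv\in\mathsf{Null}(H)$, the computation just above already shows $\bx_i^T\bv=0$ for every data vector, hence in particular $\tilde{\bx}_i^T\bv=0$ for each subsampled vector; therefore $H_m\bv=\frac{1}{m}\sum_{i=1}^m(\tilde{\bx}_i^T\bv)\,\tilde{\bx}_i=0$. I do not anticipate a genuine obstacle here, since the statement is essentially bookkeeping about ranges and null spaces. The only point needing a moment's care is the second inclusion in the range identity, where one must invoke positive semidefiniteness---so that $\bv^T H\bv=0$ forces each $\bx_i^T\bv=0$---rather than mere symmetry; every other step is a direct linear-combination argument.
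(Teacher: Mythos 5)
Your proof is correct, and while its packaging differs from the paper's, the pivotal computation is the same. The paper works in coordinates: it expands each $\bx_i$ in the eigenbasis of $H$, uses $0=\lambda_\ell=\be_\ell^T H\be_\ell=\frac{1}{n}\sum_{i=1}^n\alpha_{i,\ell}^2$ to conclude $\alpha_{i,\ell}=0$ for all $\ell>k$ (i.e., the data vectors have no component along the null eigenvectors), and then writes out $H_m\be_r$ explicitly to read off both assertions. Your quadratic-form step $0=\bv^T H\bv=\frac{1}{n}\sum_{i=1}^n(\bx_i^T\bv)^2$ is exactly that computation applied to an arbitrary $\bv\in\mathsf{Null}(H)$ rather than to the specific eigenvectors $\be_{k+1},\ldots,\be_d$; what you add is the basis-free packaging via the identity $\mathsf{Range}(H)=\mathsf{Span}\{\bx_1,\ldots,\bx_n\}$, obtained from $\mathsf{Range}(H)=\mathsf{Null}(H)^\perp$ and orthogonal complementation. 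This buys a cleaner, reusable statement: it makes transparent that $H_m\bu=\frac{1}{m}\sum_{i=1}^m(\tilde{\bx}_i^T\bu)\,\tilde{\bx}_i$ lands in the data span irrespective of any eigenstructure, whereas the paper's coordinate expansion gives the explicit formula for $H_m\be_r$ but nothing beyond the claim. Two minor remarks. First, for the claim you only need the inclusion $\mathsf{Span}\{\bx_1,\ldots,\bx_n\}\subseteq\mathsf{Range}(H)$, so establishing the full identity is optional (though harmless), and your direct verification of $H_m\bv=0$ sensibly avoids dualizing a second time. Second, the duality $\mathsf{Range}(H)=\mathsf{Null}(H)^\perp$ and the identity $W^{\perp\perp}=W$ are automatic here because $H$ has rank at most $n$, so its range is finite-dimensional and closed even if $\HH$ is infinite-dimensional; the paper sidesteps this by fixing the finite eigenbasis $\{\be_1,\ldots,\be_d\}$ at the outset. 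You are also right to flag positive semidefiniteness as the one point of care: symmetry alone would not force $\bx_i^T\bv=0$ from $\bv^T H\bv=0$.
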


By the above claim, the update equation (\ref{eq:sgd-decomp}) can be decomposed into two components: 
 
\begin{align}
\bP_{\bdelta_{t+1}} &= \bP_{\bdelta_t} - \eta H_m \bP_{\bdelta_t},\label{eq:sgd-1}\\
\bQ_{\bdelta_{t+1}}&=\bQ_{\bdelta_t}\label{eq:sgd-2}
\end{align}

From (\ref{eq:quad-loss-equiv}), it follows that for any iteration $t$, the target loss function $\cL_{\bw_t}$ is not affected at all by $\bQ_{\bdelta_t}$, that is, $\bP_{\bdelta_t}$ is the only component that matters. Hence, by (\ref{eq:sgd-1}-\ref{eq:sgd-2}), we only need to consider the effective SGD update (\ref{eq:sgd-1}), i.e., the update in the span of $\{\be_1, \ldots, \be_k\}$.

\subsection{Upper bound on the expected empirical loss}
The following theorem provides an upper bound on the expected empirical loss after $t$ iterations of mini-batch SGD whose update step is given by (\ref{eq:sgd}). 

\begin{theorem}\label{thm:upper-bd}
For any $\lambda \in [\lambda_k, \lambda_1], m\in\mathbb{N},$ and $0<\eta<\frac{2m}{\beta + (m-1)\lambda_1 }\,$ define
$$g(\lambda; m, \eta) \triangleq (1 - \eta \lambda)^2 + \frac{\eta^2 \lambda}{m} (\beta - \lambda).$$
Let $g(m, \eta) \triangleq \max_{\lambda \in 
[\lambda_k, \lambda_1]}g(\lambda; m, \eta).$ In the interpolation setting, for any $t\geq 1,$ the mini-batch SGD with update step~(\ref{eq:sgd}) yields the following guarantee 
\begin{equation} 
\expect{\cL(\bw_{t})}\leq \lambda_1\cdot\expect{\norm{\bP_{\bdelta_{t}}^2}} \leq \lambda_1\cdot \left(g\left(m, \eta\right)\right)^t
\cdot \expect{\norm{\bP_{\bdelta_0}}^2}\nonumber
\end{equation}

\end{theorem}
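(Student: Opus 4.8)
The plan is to track the squared norm of the in-range error $\bP_{\bdelta_t}$ and show it contracts in expectation by the factor $g(m,\eta)$ at each step. First I would dispose of the two easy pieces. The leftmost inequality is immediate from the loss representation (\ref{eq:quad-loss-equiv}): since $\bP_{\bdelta_t}$ lies in $\mathsf{Span}\{\be_1,\dots,\be_k\}$, where $H$ acts with eigenvalues at most $\lambda_1$, we have $\cL(\bw_t)=\bP_{\bdelta_t}^T H \bP_{\bdelta_t}\le \lambda_1\norm{\bP_{\bdelta_t}}^2$, and taking expectations gives $\expect{\cL(\bw_t)}\le\lambda_1\expect{\norm{\bP_{\bdelta_t}}^2}$. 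By Claim~\ref{claim:eigen-decomp} the null-space component is frozen (\ref{eq:sgd-2}), so only the in-range recursion $\bP_{\bdelta_{t+1}}=(I-\eta H_m)\bP_{\bdelta_t}$ from (\ref{eq:sgd-1}) is relevant.

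The core of the argument is a one-step contraction. Conditioning on $\bP_{\bdelta_t}$ and taking expectation over the fresh mini-batch drawn at step $t+1$,
\begin{equation*}
\expect{\norm{\bP_{\bdelta_{t+1}}}^2 \mid \bP_{\bdelta_t}} = \bP_{\bdelta_t}^T\, \expect{(I-\eta H_m)^2}\, \bP_{\bdelta_t} = \bP_{\bdelta_t}^T\left(I - 2\eta H + \eta^2\,\expect{H_m^2}\right)\bP_{\bdelta_t},
\end{equation*}
using $\expect{H_m}=H$ because the subsample is drawn uniformly with replacement. The key computation is the second moment $\expect{H_m^2}$. Writing $H_m=\frac{1}{m}\sum_{i=1}^m \tilde\bx_i\tilde\bx_i^T$ and expanding, the $m$ diagonal terms ($i=j$) contribute $\frac1m M$ with $M\triangleq\frac1n\sum_{l=1}^n\norm{\bx_l}^2\bx_l\bx_l^T$, while the $m(m-1)$ off-diagonal terms ($i\neq j$) factor by independence into $\frac{m-1}{m}H^2$; hence $\expect{H_m^2}=\frac1m M+\frac{m-1}{m}H^2$.

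The final step turns this into the scalar factor $g(m,\eta)$. Using the hypothesis $\norm{\bx_l}^2\le\beta$ I would bound the fourth-moment operator in the PSD order as $M\preceq\beta H$, so that $\bP_{\bdelta_t}^T M\bP_{\bdelta_t}\le\beta\,\bP_{\bdelta_t}^T H\bP_{\bdelta_t}$. Substituting and diagonalizing $\bP_{\bdelta_t}=\sum_{i=1}^k c_i\be_i$ in the eigenbasis of $H$ makes every term per-coordinate, and the coefficient multiplying $c_i^2$ collapses to exactly $1-2\eta\lambda_i+\eta^2\frac{m-1}{m}\lambda_i^2+\eta^2\frac{\beta}{m}\lambda_i=g(\lambda_i;m,\eta)$. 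Since each $\lambda_i\in[\lambda_k,\lambda_1]$, bounding each factor by its maximum $g(m,\eta)$ yields $\expect{\norm{\bP_{\bdelta_{t+1}}}^2\mid\bP_{\bdelta_t}}\le g(m,\eta)\norm{\bP_{\bdelta_t}}^2$. Taking total expectation and iterating $t$ times gives $\expect{\norm{\bP_{\bdelta_t}}^2}\le(g(m,\eta))^t\,\expect{\norm{\bP_{\bdelta_0}}^2}$, which combined with the first inequality is the claim; the stated range $0<\eta<\frac{2m}{\beta+(m-1)\lambda_1}$ is precisely what forces $g(\lambda;m,\eta)<1$ for every $\lambda\le\lambda_1$, so the bound genuinely decays.

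I expect the main obstacle to be the $\expect{H_m^2}$ computation, specifically keeping the diagonal/off-diagonal bookkeeping straight and correctly identifying the fourth-moment operator $M$, together with the PSD bound $M\preceq\beta H$. This is the only place where the scalar $\beta$ (and the fact that it controls $\norm{\bx_l}^2$, rather than a smoothness constant of each $\ell_i$) enters the argument; everything after the diagonalization is routine algebra designed to reproduce the definition of $g$.
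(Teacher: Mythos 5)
Your proposal is correct and follows essentially the same route as the paper's own proof: the same one-step contraction $\expect{\norm{\bP_{\bdelta_{t}}}^2}=\expect{\bP_{\bdelta_{t-1}}^T\left(I-2\eta H+\eta^2\expect{H_m^2}\right)\bP_{\bdelta_{t-1}}}$ using independence of the fresh mini-batch, the same second-moment decomposition $\expect{H_m^2}=\frac{1}{m}\expect{H_1^2}+\frac{m-1}{m}H^2$ with the PSD bound $\expect{H_1^2}\preceq\beta H$ (your operator $M$ is exactly the paper's $\expect{H_1^2}$), and the same diagonalization yielding the per-eigenvalue factor $g(\lambda_i;m,\eta)$ maximized over $[\lambda_k,\lambda_1]$, followed by iteration and the final bound $\cL(\bw_t)\leq\lambda_1\norm{\bP_{\bdelta_t}}^2$. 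The only differences are cosmetic: you make the fourth-moment operator and the conditioning explicit where the paper states them implicitly.
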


\begin{proof}
By reordering terms in the update equation~\ref{eq:sgd-1} and using the independence of $H_m $ and $\bw_{t-1}$, the variance in the effective component of the parameter update can be written as
$$
\expect{\norm{\bP_{\bdelta_t}}^2}
= \expect{\bP_{\bdelta_{t-1}}^T {(I - 2 \eta H + \eta^2 \expect{H_m^2})} \bP_{\bdelta_{t-1}}}
$$
To obtain an upper bound, we need to bound $\expect{H_m^2}$. Notice that $H_m = \frac{1}{m}\sum_{i=1}^m H_1^{(i)}$ where $H_1^{(i)}, i = 1,\ldots, m$ are unit-rank independent subsample covariances. Expanding $H_m$ accordingly yield
\begin{align}
\expect{H_m^2}
= \frac{1}{m} \expect{H_1^2} + \frac{m-1}{m} H^2
\preceq \frac{\beta}{m} H + \frac{m-1}{m} H^2 \label{bound-Hm}
\end{align}

Let $G_{m, \eta} \triangleq I - 2 \eta H + \eta^2 (\frac{\beta}{m} H + \frac{m-1}{m} H^2)$. Then the variance is bounded as
$$
\expect{\norm{\bP_{\bdelta_t}}^2} \leq
\expect{\bP_{\bdelta_{t-1}}^T G_{m, \eta} \bP_{\bdelta_{t-1}}}
$$
Clearly, $\lim_{t\rightarrow \infty}{\expect{\norm{\bP_{\bdelta_t}}^2}} = 0$ if
\begin{equation}\label{eq:convergence-cond}
\norm{G_{m, \eta}} < 1
\Leftrightarrow
\eta < \eta_1(m)\triangleq \frac{2m}{\beta + (m-1)\lambda_1 }
\end{equation}
Furthermore, the convergence rate relies on the eigenvalues of $G_{m, \eta}$. Let $\lambda$ be a non-zero eigenvalue of $H$, then the corresponding eigenvalue of $G_{m, \eta}$ is given by
$$
g(\lambda; m, \eta) =
1 - 2\eta \lambda + \eta^2 [\frac{\beta}{m} \lambda + (1 - \frac{1}{m})\lambda^2]
= (1 - \eta \lambda)^2 + \frac{\eta^2 \lambda}{m} (\beta - \lambda)
$$
When the step size $\eta$ and mini-batch size $m$ are chosen (satisfying constraint~\ref{eq:convergence-cond}), we have
\begin{equation} 
\expect{\norm{\bP_{\bdelta_t}}^2} \leq g(m, \eta)
\cdot \expect{\norm{\bP_{\bdelta_{t-1}}^2}}\nonumber
\end{equation}
where 
\begin{equation} 
g(m, \eta) \triangleq \max_{\lambda \in \{ \lambda_1, \ldots, \lambda_k \}}g(\lambda; m, \eta).\nonumber
\end{equation}

Finally, observe that 
\begin{equation}
\cL(\bw_{t})\leq \lambda_1 \norm{\bP_{\bdelta_{t}}}^2\label{ineq:last-ineq-upper}
\end{equation}
which follows directly from (\ref{eq:quad-loss-equiv}).
\end{proof}

\subsection{Tightness of the bound on the expected empirical loss}
We now show that our upper bound given above is indeed tight in the interpolation setting for the class of quadratic loss functions defined in~(\ref{eq:emp-quad-loss}). Namely, we give a specific instance of (\ref{eq:emp-quad-loss}) where the upper bound in Theorem~\ref{thm:upper-bd} is tight.

\begin{theorem}~\label{thm:lower-bd}
There is a data set $\{(\bx_i, y_i)\in \HH\times \re: 1\leq i\leq n\}$ such that the mini-batch SGD with update step~(\ref{eq:sgd}) yields the following lower bound on the expected empirical quadratic loss $\cL(\bw)$
\begin{align}
\expect{\cL(\bw_{t})}&= \lambda_1\cdot\expect{\norm{\bdelta_t}^2} = \lambda_1\cdot \left(g\left(m, \eta\right)\right)^t
\cdot \expect{\norm{\bdelta_0}^2}\nonumber
\end{align}
\end{theorem}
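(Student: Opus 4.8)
The plan is to construct an explicit dataset for which every inequality used in the proof of Theorem~\ref{thm:upper-bd} becomes an equality. The only inequalities in that proof are the semidefinite bound $\expect{H_m^2}\preceq \frac{\beta}{m}H + \frac{m-1}{m}H^2$ coming from $\expect{H_1^2}\preceq \beta H$ in~(\ref{bound-Hm}), and the final step $\cL(\bw_t)\leq \lambda_1\norm{\bP_{\bdelta_t}}^2$ in~(\ref{ineq:last-ineq-upper}). So I would design the data to force both to be tight. The cleanest way is to take a \emph{rank-one} problem: choose all feature vectors collinear, so $H$ has a single nonzero eigenvalue. Concretely, let the data live along one direction $\be_1$ with $\norm{\bx_i}^2=\beta$ for every $i$ (say $\bx_i=\sqrt{\beta}\,\be_1$ for all $i$), and pick labels $y_i$ consistent with interpolation by a single $\bw^*$. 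Then $\lambda_1=\lambda_k=\beta/$ (normalization) is the unique nonzero eigenvalue, $\bP_{\bdelta_t}$ is one-dimensional, and~(\ref{ineq:last-ineq-upper}) holds with equality automatically since $\cL(\bw_t)=\lambda_1\norm{\bP_{\bdelta_t}}^2$ when there is only one active eigendirection.

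Next I would verify that the subsample bound is exact for this construction. Since every $\tilde{\bx}_i$ is the same vector $\sqrt{\beta}\,\be_1$, each unit-rank $H_1^{(i)}=\beta\,\be_1\be_1^T$ is deterministic, so $H_m=H$ identically and $\expect{H_1^2}=\beta^2\be_1\be_1^T=\beta H$ exactly, making~(\ref{bound-Hm}) an equality. Feeding this into the one-step recursion gives $\expect{\norm{\bP_{\bdelta_t}}^2}=g(\lambda_1;m,\eta)\,\expect{\norm{\bP_{\bdelta_{t-1}}}^2}$ with equality, and since $\lambda_1$ is the only eigenvalue in the range, $g(m,\eta)=g(\lambda_1;m,\eta)$. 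Iterating over $t$ and using the equality version of~(\ref{ineq:last-ineq-upper}) yields exactly the claimed identity $\expect{\cL(\bw_t)}=\lambda_1\,(g(m,\eta))^t\,\expect{\norm{\bdelta_0}^2}$, where $\bdelta_t=\bP_{\bdelta_t}$ because the null-space component can be taken to be zero by initializing $\bw_0$ in $\mathsf{Range}(H)$.

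The main subtlety — and the step I would treat most carefully — is the gap between the variance term $\frac{1}{m}\expect{H_1^2}$ and its bound $\frac{\beta}{m}H$. The rank-one collinear construction kills \emph{all} sampling variance ($H_m=H$), which trivially matches the bound but is somewhat degenerate; if the theorem intends a genuinely stochastic instance, I would instead spread the data over the eigendirection while keeping $\norm{\bx_i}^2=\beta$ fixed for all $i$, so that $\expect{H_1^2}=\frac{1}{n}\sum_i\norm{\bx_i}^2\bx_i\bx_i^T=\beta H$ remains exact even though individual subsamples now differ. This second construction preserves the equality $\expect{H_1^2}=\beta H$ (because $\norm{\bx_i}^2\equiv\beta$ pulls the scalar out of the sum) while making the dynamics genuinely random, and it is the version I would present. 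I would then double-check that restricting to $\mathsf{Range}(H)$ is legitimate — i.e.\ that writing $\bdelta_t$ in place of $\bP_{\bdelta_t}$ in the statement is justified — by noting $\bQ_{\bdelta_0}=0$ under the chosen initialization so that~(\ref{eq:sgd-2}) keeps the null-space part zero for all $t$.
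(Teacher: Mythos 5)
Your overall strategy is the same as the paper's: identify the only two inequalities in the proof of Theorem~\ref{thm:upper-bd} --- the bound on $\expect{H_m^2}$ in (\ref{bound-Hm}) and the final step (\ref{ineq:last-ineq-upper}) --- and construct a dataset making both of them equalities. Your rank-one instance ($\bx_i=\sqrt{\beta}\,\be_1$ for all $i$) does satisfy the claimed equality: then $\lambda_1=\lambda_k$, $g(m,\eta)=(1-\eta\lambda_1)^2$, and $H_m=H$ deterministically, and your range-space initialization is a legitimate way to replace $\bP_{\bdelta_t}$ by $\bdelta_t$ (a point the paper instead handles by working in the fully parametrized setting). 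So the existence claim is proved, albeit by an instance in which mini-batch SGD coincides with full gradient descent.

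The gap is in your second, ``genuinely stochastic'' construction, which is where your proposal diverges from the paper and where it is under-specified. Equal norms $\norm{\bx_i}^2=\beta$ indeed give $\expect{H_1^2}=\beta H$ exactly, but that alone does not make the bound tight: if the data are spread over several directions with distinct nonzero eigenvalues of $H$, then (a) the one-step relation $\expect{\norm{\bP_{\bdelta_t}}^2}=g(m,\eta)\,\expect{\norm{\bP_{\bdelta_{t-1}}}^2}$ holds with equality only when $\bdelta_{t-1}$ lies in the eigenspace whose eigenvalue attains $\max_{\lambda}g(\lambda;m,\eta)$ (an endpoint, $\lambda_k$ or $\lambda_1$, depending on $\eta$), and (b) the final equality $\cL(\bw_t)=\lambda_1\norm{\bdelta_t}^2$ fails. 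Conversely, if you keep the data collinear with fixed norm, every outer product is the same matrix $\beta\,\be_1\be_1^T$, so individual subsamples cannot ``differ'' as you assert, and the dynamics are never random. The missing condition --- exactly what the paper adds --- is that \emph{all nonzero eigenvalues of $H$ be equal}, achieved by taking the $n$ equal-norm feature vectors mutually orthogonal (possible in the fully parametrized setting $d\geq n$): then $H$ restricted to its range is $(\beta/n)I$, the maximization over $\lambda$ is trivial, both inequalities become equalities, and the sampled $H_m$ is genuinely random. A minor further point: your rank-one instance has $\tr(H)=\lambda_1=\beta$, so the paper's standing relation $\beta>\lambda_1$ degenerates, whereas the orthogonal construction gives $\lambda_1=\beta/n<\beta$ and stays inside the stated setting.
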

\begin{proof}
We start the proof by observing that there are only two places in the proof of Theorem~\ref{thm:upper-bd} where the upper bound may not be tight, namely, the last inequality in~\ref{bound-Hm} and inequality (\ref{ineq:last-ineq-upper}).  Consider a data set where all the feature vectors $\bx_i, i=1, \ldots, n$, lie on the sphere of radius $\beta$, that is, $\norm{\bx_i}^2 = \beta,~\forall i=1, \ldots, n.$ We note that the last inequality in~\ref{bound-Hm} in the proof of Theorem~\ref{thm:upper-bd} is tight in that setting. Suppose that, additionally, we choose the feature vectors such that the eigenvalues of the sample covariance matrix $H$ are all equal, that is, $\lambda_1=\lambda_2=\cdots=\lambda_n$. This can be done, for example, by choosing all the feature vectors to be orthogonal (note that this is possible in the fully parametrized setting). Hence, in this case, (\ref{eq:emp-quad-loss}) implies
\begin{equation}
\cL(\bw_{t})= \lambda_1 \norm{\bdelta_{t}}^2\nonumber
\end{equation}
which shows that inequality (\ref{ineq:last-ineq-upper}) is also achieved with equality in that setting. This completes the proof.
\end{proof}

\noindent{\bf Remark.} From the experimental results, it appears that our upper bound can be close to tight even in some settings when the eigenvalues are far apart. We plan to investigate this phenomenon further.

\subsection{Optimal step size for a given batch size} \label{sec:optimal-eta}
To fully answer the first question we posed at the beginning of this section, we will derive an optimal rule for choosing the step size as a function of the batch size. Specifically, we want to find step size $\eta^*(m)$ to achieve fastest convergence. Given Theorem~\ref{thm:upper-bd}, our task reduces to finding the minimizer 
\begin{align}
\eta^*(m)=\arg\min_{\eta < \frac{2}{\frac{\beta}{m} + \frac{m-1}{m} \lambda_1 } } g(m, \eta)\label{eq:opt-step-def}
\end{align}

Let $g^*(m)$ denote the resulting minimum, that is, $g^*(m)=g\left(m, \eta^*(m)\right)$.
The resulting expression for the minimizer $\eta^*(m)$ generally depends on the least non-zero eigenvalue $\lambda_k$ of the Hessian matrix. In situations where we don't have a good estimate for this eigenvalue (which can be close to zero in practice), one would rather have a step size that is independent of $\lambda_k$. In Theorem \ref{thm:approx-step-size}, we give a near-optimal approximation for step size with no dependence on $\lambda_k$ under the assumption that $\beta/\lambda_k =\Omega(n)$, which is valid in many practical settings such as in kernel learning with positive definite kernels.

We first characterize exactly the optimal step size and the resulting $g^*(m)$. 

\begin{theorem}[Optimal step size role as function of batch size]\label{thm:opt-step-size}
For every batch size $m$, the optimal step size function $\eta^*(m)$ and convergence rate function $g^*(m)$ are given by:
\begin{align}
\eta^*(m)&=\left\{\begin{matrix}
 \frac{m}{\beta+(m-1)\lambda_k} &  m\leq \frac{\beta}{\lambda_1-\lambda_k}+1 \\ 
 \frac{2m}{\beta+(m-1)(\lambda_1+\lambda_k)} &  m> \frac{\beta}{\lambda_1-\lambda_k}+1
 \end{matrix}\right.\label{eq:opt-eta}\\
g^*(m)&=\left\{\begin{matrix}
 1 - \frac{m \lambda_k}{\beta + (m-1)\lambda_k} &  m\leq \frac{\beta}{\lambda_1-\lambda_k}+1 \\ 
 1 - 4\frac{m(m-1) \lambda_1\lambda_k}{\left(\beta + (m-1)(\lambda_1+\lambda_k)\right)^2} &  m> \frac{\beta}{\lambda_1-\lambda_k}+1
 \end{matrix}\right.\label{eq:opt-g}
 \end{align}

Note that if $\lambda_1=\lambda_k$, then the first case in each expression will be valid for all $m\geq 1$.
\end{theorem}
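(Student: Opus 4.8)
The plan is to exploit the fact that $g(\lambda; m, \eta)$ is, for fixed $(m,\eta)$, a convex parabola in $\lambda$: its leading coefficient is $\frac{m-1}{m}\eta^2 \ge 0$ (affine in the degenerate case $m=1$). Hence the inner maximum over the interval is attained at an endpoint, so that $g(m,\eta) = \max\{g(\lambda_k; m,\eta),\, g(\lambda_1; m,\eta)\}$. Writing $g_k(\eta)\defeq g(\lambda_k;m,\eta)$ and $g_1(\eta)\defeq g(\lambda_1;m,\eta)$, the problem collapses to minimizing over $\eta$ the pointwise maximum of two upward parabolas.

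Next I would locate where these two parabolas meet. A direct computation of the difference factors cleanly as
\[
g_1(\eta) - g_k(\eta) = \eta(\lambda_1 - \lambda_k)\left[\frac{\beta + (m-1)(\lambda_1+\lambda_k)}{m}\,\eta - 2\right],
\]
so (for $\lambda_1>\lambda_k$) the curves cross only at $\eta=0$ and at $\eta_\times \defeq \frac{2m}{\beta+(m-1)(\lambda_1+\lambda_k)}$, with $g_k$ dominating on $(0,\eta_\times)$ and $g_1$ dominating for $\eta>\eta_\times$. Thus $g(m,\eta)=g_k(\eta)$ on $(0,\eta_\times]$ and $g(m,\eta)=g_1(\eta)$ on $[\eta_\times,\eta_1(m))$. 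I would also record the unconstrained minimizer of each single parabola, found by setting its $\eta$-derivative to zero: $\eta_\lambda^\ast = \frac{m}{\beta+(m-1)\lambda}$, attaining value $1-\frac{m\lambda}{\beta+(m-1)\lambda}$. A short inequality check shows $\eta_1^\ast < \eta_\times$ \emph{always}, so on the branch $\eta\ge\eta_\times$ the dominant parabola $g_1$ is already past its vertex and increasing.

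The heart of the argument is then a two-case comparison of $\eta_k^\ast$ against $\eta_\times$. One line of algebra gives $\eta_k^\ast \le \eta_\times \iff (m-1)(\lambda_1-\lambda_k)\le \beta \iff m \le \frac{\beta}{\lambda_1-\lambda_k}+1$, exactly the stated threshold. In the regime $m\le \frac{\beta}{\lambda_1-\lambda_k}+1$, the vertex $\eta_k^\ast$ of $g_k$ lies inside the branch $(0,\eta_\times]$ where $g=g_k$, while on the complementary branch $g_1$ only increases; hence the global minimum is $\eta^\ast(m)=\eta_k^\ast=\frac{m}{\beta+(m-1)\lambda_k}$ with $g^\ast(m)=1-\frac{m\lambda_k}{\beta+(m-1)\lambda_k}$, the first case. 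In the regime $m>\frac{\beta}{\lambda_1-\lambda_k}+1$, $g_k$ is still strictly decreasing throughout $(0,\eta_\times]$ (as $\eta_k^\ast>\eta_\times$) while $g_1$ is increasing on $[\eta_\times,\cdot)$, so both branches are minimized at their common meeting point; this gives $\eta^\ast(m)=\eta_\times$, and substituting $\eta_\times$ into $g_1$ (equivalently $g_k$, since they agree there) and simplifying yields $g^\ast(m)=1-\frac{4m(m-1)\lambda_1\lambda_k}{(\beta+(m-1)(\lambda_1+\lambda_k))^2}$, the second case.

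The main obstacle is bookkeeping the piecewise $\max\{g_1,g_k\}$ and certifying that the candidate is the true global minimizer rather than a spurious kink; this rests entirely on the two monotonicity facts $\eta_1^\ast<\eta_\times$ (always) and the sign of $\eta_k^\ast-\eta_\times$ (the case split). I would close by verifying admissibility: both optima satisfy $\eta<\eta_1(m)=\frac{2m}{\beta+(m-1)\lambda_1}$, since $\lambda_k>0$ forces $\eta_\times<\eta_1(m)$ and a short inequality handles $\eta_k^\ast$ in the first regime. Finally, when $\lambda_1=\lambda_k$ the interval degenerates to a point and the threshold tends to $+\infty$, so the first-case formulas hold for all $m$, matching the concluding remark of the theorem.
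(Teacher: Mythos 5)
Your proposal is correct and follows essentially the same route as the paper: reduce the inner maximum over $\lambda$ to the endpoints $\{\lambda_k,\lambda_1\}$, identify the crossover step size $\eta_\times$ (the paper's $\eta_0(m)$ in Lemma~\ref{lem:max-g}), and minimize the two branches separately with the case split $\eta_k^\ast \lessgtr \eta_\times$ exactly as in Lemma~\ref{lem:opt-per-each-g}. Your explicit factorization of $g_1(\eta)-g_k(\eta)$ and the monotonicity facts ($\eta_1^\ast<\eta_\times$ always, sign of $\eta_k^\ast-\eta_\times$) simply make precise the steps the paper dispatches with ``it is not hard to show.''
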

The proof of the above theorem follows from the following two lemmas. 

\begin{lemma}\label{lem:max-g}
Let $\eta_0(m)\triangleq \frac{2m}{\beta+(m-1)(\lambda_1+\lambda_k)}$, and let $\eta_1(m)\triangleq \frac{2m}{\beta + (m-1)\lambda_1 }$. Then,
\begin{align}
g(m, \eta)&=\left\{\begin{matrix}
 g^{\rom{1}}(m, \eta)\triangleq g(\lambda_k; m, \eta) &  \eta\leq \eta_{0}(m) \\ 
 g^{\rom{2}}(m, \eta)\triangleq g(\lambda_1; m, \eta) &  \eta_0(m)<\eta\leq\eta_{1}(m)
 \end{matrix}\right.\nonumber
\end{align}
\end{lemma}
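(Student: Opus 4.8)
The plan is to exploit the fact that, for fixed mini-batch size $m$ and step size $\eta$, the quantity $g(\lambda; m, \eta)$ is a quadratic function of the eigenvalue variable $\lambda$, so that its maximum over the interval $[\lambda_k, \lambda_1]$ is controlled entirely by the two endpoints. First I would expand the definition
$$
g(\lambda; m, \eta) = (1 - \eta\lambda)^2 + \frac{\eta^2 \lambda}{m}(\beta - \lambda)
$$
into the standard form $A\lambda^2 + B\lambda + C$ with $A = \eta^2\frac{m-1}{m}$, $B = \frac{\eta^2 \beta}{m} - 2\eta$, and $C = 1$. The key structural observation is that the leading coefficient $A = \eta^2(m-1)/m$ is nonnegative for every $m \ge 1$, so $g(\cdot\,; m, \eta)$ is a convex (upward-opening) parabola in $\lambda$, degenerating to an affine function when $m = 1$. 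A convex function on a closed interval attains its maximum at one of the endpoints, which immediately gives $g(m, \eta) = \max\{g(\lambda_k; m, \eta),\, g(\lambda_1; m, \eta)\}$ and reduces the lemma to deciding which of the two endpoint values is larger.

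Next I would compute the endpoint difference in closed form. Since $g$ is quadratic,
$$
g(\lambda_k; m, \eta) - g(\lambda_1; m, \eta) = (\lambda_k - \lambda_1)\bigl[A(\lambda_k + \lambda_1) + B\bigr],
$$
and substituting the expressions for $A$ and $B$ gives $A(\lambda_k + \lambda_1) + B = \eta\bigl[\frac{\eta}{m}\bigl((m-1)(\lambda_1+\lambda_k) + \beta\bigr) - 2\bigr]$. Because $\eta > 0$ and $\lambda_k \le \lambda_1$, the sign of the difference is governed entirely by the bracketed term, which is nonpositive precisely when $\eta \le \frac{2m}{\beta + (m-1)(\lambda_1 + \lambda_k)} = \eta_0(m)$. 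Thus for $\eta \le \eta_0(m)$ one has $g(\lambda_k; m, \eta) \ge g(\lambda_1; m, \eta)$, so the maximum is $g^{\rom{1}} = g(\lambda_k; m, \eta)$, while for $\eta > \eta_0(m)$ the inequality reverses and the maximum is $g^{\rom{2}} = g(\lambda_1; m, \eta)$, which is exactly the claimed case split.

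Finally I would record the two consistency checks that make the statement well-posed. One is that $\eta_0(m) \le \eta_1(m)$, which holds because $\lambda_1 + \lambda_k \ge \lambda_1$ makes the denominator of $\eta_0$ at least that of $\eta_1$; this guarantees the second regime $\eta_0(m) < \eta \le \eta_1(m)$ sits inside the convergence window $\eta \le \eta_1(m)$ of Theorem~\ref{thm:upper-bd}. The other is the degenerate case $m = 1$ (and, more generally, $\lambda_1 = \lambda_k$), where $A = 0$ or the interval collapses; here $g$ is affine or constant in $\lambda$, the endpoint analysis still applies verbatim, and $\eta_0(1) = \eta_1(1) = 2/\beta$ so the second regime is empty, consistent with the remark following Theorem~\ref{thm:opt-step-size}. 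I do not expect a genuine obstacle in this argument: everything reduces to the convexity of a one-dimensional parabola and an elementary sign computation. The only point requiring care is getting the direction of the inequality right at the threshold and tracking that the factor $(\lambda_k - \lambda_1) \le 0$ flips the sign relative to the bracketed term.
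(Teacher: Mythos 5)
Your proposal is correct and takes essentially the same route as the paper's proof: view $g(\lambda; m, \eta)$ as a quadratic in $\lambda$, reduce the maximum over $[\lambda_k, \lambda_1]$ to the endpoints, and locate the crossover by comparing the two endpoint values, which yields exactly the threshold $\eta_0(m)$. If anything, your version is slightly more careful than the paper's terse argument, since you explicitly verify that the leading coefficient $\eta^2(m-1)/m \geq 0$ makes the parabola convex --- the fact actually needed to justify that the maximum of a quadratic on a closed interval sits at an endpoint.
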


\begin{proof}
For any fixed $m\geq 1$ and $\eta < \eta_1(m),$ observe that $g(\lambda; m, \eta)$ is a quadratic function of $\lambda$. Hence, the maximum must occur at either $\lambda=\lambda_k$ or $\lambda=\lambda_1$. Define $g^{\rom{1}}(m, \eta)\triangleq g(\lambda_k; m, \eta)$ and $g^{\rom{2}}(m, \eta)\triangleq g(\lambda_1; m, \eta)$. Now, depending on the value of $m$ and $\eta$, we would either have $g^{\rom{1}}(m, \eta)\geq g^{\rom{2}}(m, \eta)$ or $g^{\rom{1}}(m, \eta) < g^{\rom{2}}(m, \eta)$. In particular, it is not hard to show that 
$$g^{\rom{1}}(m, \eta)\geq g^{\rom{2}}(m, \eta)\Leftrightarrow \eta \leq \eta_0(m),$$
where $\eta_0(m)\triangleq \frac{2m}{\beta+(m-1)(\lambda_1+\lambda_k)}$. This completes the proof.
\end{proof}

\begin{lemma}\label{lem:opt-per-each-g}
Given the quantities defined in Lemma~\ref{lem:max-g}, let $\eta^{\rom{1}}(m)=\argmin\limits_{\eta\leq \eta_0(m)}g^{\rom{1}}(m, \eta)$, and $\eta^{\rom{2}}(m)=\argmin\limits_{\eta_0(m)<\eta\leq \eta_1(m)}g^{\rom{2}}(m, \eta)$. Then, we have 
\begin{enumerate}
\item For all $m\geq 1$, $g^{\rom{1}}\left(m, \eta^{\rom{1}}(m)\right)\leq g^{\rom{2}}\left(m, \eta^{\rom{2}}(m)\right)$. 
\item For all $m\geq 1$, $\eta^{\rom{1}}(m)=\eta^*(m)$ and $g^{\rom{1}}\left(m, \eta^{\rom{1}}(m)\right)= g^*(m)$, where $\eta^*(m)$ and $g^*(m)$ are as given by (\ref{eq:opt-eta}) and (\ref{eq:opt-g}), respectively, (in Theorem~\ref{thm:opt-step-size}).
\end{enumerate}
\end{lemma}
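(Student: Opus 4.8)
The plan is to treat each branch $g^{\rom{1}}(m,\cdot)$ and $g^{\rom{2}}(m,\cdot)$ as a one–dimensional convex parabola in $\eta$ and minimize it over its own feasible interval; both claims then reduce to locating the unconstrained vertex of each parabola relative to the crossover point $\eta_0(m)$ from Lemma~\ref{lem:max-g}.

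First I would settle Claim~2. Expanding $g^{\rom{1}}(m,\eta)=g(\lambda_k;m,\eta)=1-2\eta\lambda_k+\eta^2\lambda_k\frac{\beta+(m-1)\lambda_k}{m}$, this is a convex parabola in $\eta$ with unconstrained vertex $\tilde\eta(m)=\frac{m}{\beta+(m-1)\lambda_k}$. A direct cross-multiplication shows $\tilde\eta(m)\le\eta_0(m)$ if and only if $(m-1)(\lambda_1-\lambda_k)\le\beta$, i.e. $m\le\frac{\beta}{\lambda_1-\lambda_k}+1$. In that regime the vertex is feasible, so $\eta^{\rom{1}}(m)=\tilde\eta(m)$, which is exactly the first case of (\ref{eq:opt-eta}), and plugging the vertex back in gives the minimum value $1-\frac{m\lambda_k}{\beta+(m-1)\lambda_k}$, the first case of (\ref{eq:opt-g}). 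In the complementary regime the vertex lies to the right of $\eta_0(m)$, and since the parabola is decreasing on $[0,\tilde\eta(m)]$, the constrained minimum over $[0,\eta_0(m)]$ is attained at the right endpoint $\eta^{\rom{1}}(m)=\eta_0(m)$, matching the second case of (\ref{eq:opt-eta}). The only nonroutine computation is evaluating $g^{\rom{1}}(m,\eta_0(m))$ and simplifying it to $1-\frac{4m(m-1)\lambda_1\lambda_k}{(\beta+(m-1)(\lambda_1+\lambda_k))^2}$; this is mechanical once one uses $\eta_0(m)=\frac{2m}{\beta+(m-1)(\lambda_1+\lambda_k)}$ together with the identity $\beta+(m-1)(\lambda_1+\lambda_k)-(\beta+(m-1)\lambda_k)=(m-1)\lambda_1$.

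For Claim~1, the key step is to locate the vertex of the second parabola $g^{\rom{2}}(m,\eta)=g(\lambda_1;m,\eta)$, whose unconstrained minimizer is $\frac{m}{\beta+(m-1)\lambda_1}=\tfrac12\eta_1(m)$. I would show this vertex always lies at or left of $\eta_0(m)$: the inequality $\frac{m}{\beta+(m-1)\lambda_1}\le\eta_0(m)$ is equivalent to $(m-1)(\lambda_k-\lambda_1)\le\beta$, which holds trivially since $\lambda_k\le\lambda_1$ and $\beta>0$. Consequently $g^{\rom{2}}(m,\cdot)$ is increasing throughout its feasible interval $(\eta_0(m),\eta_1(m)]$, so its infimum there equals the limiting value $g^{\rom{2}}(m,\eta_0(m))$. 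By the definition of $\eta_0(m)$ in Lemma~\ref{lem:max-g} the two branches agree at the crossover, $g^{\rom{2}}(m,\eta_0(m))=g^{\rom{1}}(m,\eta_0(m))$; and since $\eta_0(m)$ is itself feasible for the first branch, $g^{\rom{1}}(m,\eta^{\rom{1}}(m))\le g^{\rom{1}}(m,\eta_0(m))$. Chaining these gives $g^{\rom{1}}(m,\eta^{\rom{1}}(m))\le g^{\rom{2}}(m,\eta^{\rom{2}}(m))$, which is Claim~1, and then part~2 of the lemma is just the identification of $\eta^{\rom{1}}(m)$ and its value with (\ref{eq:opt-eta}) and (\ref{eq:opt-g}) already carried out above.

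I expect the main obstacle to be conceptual rather than computational: recognizing that the high-eigenvalue parabola $g^{\rom{2}}$ has its vertex to the left of its own admissible interval, so that the global minimum of $g(m,\cdot)$ can never be contributed by the $\lambda_1$-branch and is always realized on the $\lambda_k$-branch. Once this monotonicity is established, Claim~1 follows immediately from continuity of $g(m,\cdot)$ at $\eta_0(m)$, and Claim~2 is a matter of the endpoint-versus-interior case split plus the single algebraic simplification noted above. A minor technical point to handle cleanly is that the interval for $g^{\rom{2}}$ is open at $\eta_0(m)$, so one should phrase the second-branch bound via an infimum approached at $\eta_0(m)$ rather than an attained minimizer; this does not affect the inequality in Claim~1.
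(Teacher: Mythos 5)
Your proof is correct, and on item~2 it coincides with the paper's: both locate the vertex $\frac{m}{\beta+(m-1)\lambda_k}$ of the parabola $g^{\rom{1}}(m,\cdot)$, check it is feasible precisely when $m\leq\frac{\beta}{\lambda_1-\lambda_k}+1$, and otherwise take the endpoint $\eta_0(m)$, with the same endpoint simplification. On item~1, however, you take a genuinely different route. The paper evaluates the second branch in closed form --- arguing that the constrained minimum of $g^{\rom{2}}(m,\cdot)$ sits at the boundary, so $g^{\rom{2}}\left(m,\eta^{\rom{2}}(m)\right)=1-4\frac{m(m-1)\lambda_1\lambda_k}{\left(\beta+(m-1)(\lambda_1+\lambda_k)\right)^2}$ for all $m\geq 1$ --- and then compares the two resulting formulas algebraically; that comparison reduces to the complete square $\left((m-1)(\lambda_1-\lambda_k)-\beta\right)^2\geq 0$ and is spelled out only in the appendix (Lemma~\ref{lem:proof-opt-batch2}). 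You avoid any formula comparison: since the vertex $\tfrac{1}{2}\eta_1(m)$ of $g^{\rom{2}}(m,\cdot)$ always lies at or left of $\eta_0(m)$ (equivalent to $(m-1)(\lambda_k-\lambda_1)\leq\beta$, trivially true), $g^{\rom{2}}$ is increasing on $(\eta_0(m),\eta_1(m)]$, its infimum there is the crossover value, and continuity of $g$ at $\eta_0(m)$ yields the chain $g^{\rom{1}}\left(m,\eta^{\rom{1}}(m)\right)\leq g^{\rom{1}}\left(m,\eta_0(m)\right)=g^{\rom{2}}\left(m,\eta_0(m)\right)\leq g^{\rom{2}}(m,\eta)$ for all feasible $\eta$. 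This buys three things: it explains structurally why the $\lambda_1$-branch can never supply the global minimum (its vertex falls outside its own interval); it dispenses with the paper's ``one can verify'' step, which as printed even contains a typo --- it compares against $g^{\rom{2}}\left(m,\eta^{\rom{1}}(m)\right)$, which by Lemma~\ref{lem:max-g} satisfies the \emph{reverse} inequality since $\eta^{\rom{1}}(m)\leq\eta_0(m)$, so the intended quantity must be $g^{\rom{2}}\left(m,\eta^{\rom{2}}(m)\right)$; and it treats the open endpoint honestly via an infimum, whereas the paper asserts the minimum of $g^{\rom{2}}$ is ``achieved'' at the excluded point $\eta_0(m)$. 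What the paper's computation buys in return is the explicit expression for $g^{\rom{2}}\left(m,\eta^{\rom{2}}(m)\right)$ valid for all $m\geq 1$, which it reuses as $g_2(m)$ in the appendix proof of Theorem~\ref{thm:opt-batch-size}; your argument establishes item~1 without producing that expression. (A minor caveat, already noted in Theorem~\ref{thm:opt-step-size}: when $\lambda_1=\lambda_k$ the two branches coincide and your crossover argument degenerates harmlessly.)
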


\begin{proof}
First, consider $g^{\rom{1}}(m, \eta)$. For any fixed $m$, it is not hard to show that the minimizer of $g^{\rom{1}}(m, \eta)$ as a function of $\eta$, constrained to $\eta\leq \eta_0(m)$, is given by $\min\left(\eta_0(m), \frac{m}{\beta + (m-1)\lambda_k}\right)\triangleq\eta^{\rom{1}}(m)$. That is, 
\begin{align*}
\eta^{\rom{1}}(m)&=\left\{\begin{matrix}
 \frac{m}{\beta+(m-1)\lambda_k} &  m\leq \frac{\beta}{\lambda_1-\lambda_k}+1 \\ 
 \eta_0(m)=\frac{2m}{\beta+(m-1)(\lambda_1+\lambda_k)} &  m> \frac{\beta}{\lambda_1-\lambda_k}+1
 \end{matrix}\right.
\end{align*}
Substituting $\eta=\eta^{\rom{1}}(m)$ in $g^{\rom{1}}(m, \eta)$, we get 
\begin{align*}
g^{\rom{1}}\left(m, \eta^{\rom{1}}(m)\right)&=\left\{\begin{matrix}
 1 - \frac{m \lambda_k}{\beta + (m-1)\lambda_k} &  m\leq \frac{\beta}{\lambda_1-\lambda_k}+1 \\ 
 1 - 4\frac{m(m-1) \lambda_1\lambda_k}{\left(\beta + (m-1)(\lambda_1+\lambda_k)\right)^2} &  m> \frac{\beta}{\lambda_1-\lambda_k}+1
 \end{matrix}\right.
\end{align*}
Note that $\eta^{\rom{1}}(m)$ and $g^{\rom{1}}\left(m, \eta^{\rom{1}}(m)\right)$ are equal to $\eta^*(m)$ and $g^*(m)$ given in Theorem~\ref{thm:opt-step-size}, respectively. This proves item~2 of the lemma. 

Next, consider $g^{\rom{2}}(m, \eta)$. Again, for any fixed $m$, one can easily show that the minimum of $g^{\rom{2}}(m, \eta)$ as a function of $\eta$, constrained to $\eta_0(m)<\eta\leq \eta_1(m)$, is actually achieved at the boundary $\eta=\eta_0(m)$. Hence, $\eta^{\rom{2}}(m)=\eta_0(m)$. Substituting this in $g^{\rom{2}}(m, \eta)$, we get 
\begin{align*}
g^{\rom{2}}\left(m, \eta^{\rom{2}}(m)\right)&=1 - 4\frac{m(m-1) \lambda_1\lambda_k}{\left(\beta + (m-1)(\lambda_1+\lambda_k)\right)^2},~~\forall m\geq 1.
\end{align*}
We conclude the proof by showing that for all $m\geq 1$, $g^{\rom{1}}\left(m, \eta^{\rom{1}}(m)\right)\leq g^{\rom{2}}\left(m, \eta^{\rom{2}}(m)\right).$ Note that for $m> \frac{\beta}{\lambda_1-\lambda_k}+1,$ $g^{\rom{1}}\left(m, \eta^{\rom{1}}(m)\right)$ and $g^{\rom{2}}\left(m, \eta^{\rom{2}}(m)\right)$ are identical. For $m\leq \frac{\beta}{\lambda_1-\lambda_k}+1,$ given the expressions above, one can verify that $g^{\rom{1}}\left(m, \eta^{\rom{1}}(m)\right)\leq g^{\rom{2}}\left(m, \eta^{\rom{1}}(m)\right)$. 

\end{proof}

\paragraph{Proof of Theorem~\ref{thm:opt-step-size}:} Given Lemma~\ref{lem:max-g} and item~1 of Lemma~\ref{lem:opt-per-each-g}, it follows that $\eta^{\rom{1}}(m)$ is the minimizer $\eta^*(m)$ given by (\ref{eq:opt-step-def}). Item~2 of Lemma~\ref{lem:opt-per-each-g} concludes the proof of the theorem. 

\noindent {\bf Nearly optimal step size with no dependence on $\lambda_k$:} In practice, it is usually easy to obtain a good estimate for $\lambda_1$, but it is hard to reliably estimate $\lambda_k$ which is typically much smaller than $\lambda_1$ (e.g.,~\cite{chaudhari2016entropy}). That is why one would want to avoid dependence on $\lambda_k$ in practical SGD algorithms. Under a mild assumption which is typically valid in practice, we can easily find an accurate approximation $\hat{\eta}(m)$ of optimal $\eta^*(m)$ that depends only on $\lambda_1$ and $\beta$.  Namely, we assume that $\lambda_k/\beta\leq 1/n$. In particular, this is always true in kernel learning with positive definite kernels, when the data points are distinct. 

The following theorem provides such approximation resulting in a nearly optimal convergence rate $\hat{g}(m)$.
\begin{theorem}\label{thm:approx-step-size}
Suppose that $\lambda_k/\beta\leq 1/n$. Let $\hat{\eta}(m)$ be defined as: 
\begin{align}
\hat{\eta}(m)&=\left\{\begin{matrix}
 \frac{m}{\beta\left(1+(m-1)/n\right)} &  m\leq \frac{\beta}{\lambda_1-\beta/n}+1 \\ 
 \frac{2m}{\beta+(m-1)(\lambda_1+\beta/n)} &  m> \frac{\beta}{\lambda_1-\beta/n}+1
 \end{matrix}\right.\label{eq:approx-eta}
\end{align}
Then, the step size $\hat{\eta}(m)$ yields the following upper bound on $g\left(m, \hat{\eta}(m)\right)$, denoted as $\hat{g}(m)$: 
\begin{align}
\hat{g}\left(m\right)&=\left\{\begin{matrix}
 1 - \frac{m \lambda_k}{\beta\left(1 + (m-1)/n\right)} &  m\leq \frac{\beta}{\lambda_1-\beta/n}+1 \\ 
 1 - 4\frac{m(m-1) \lambda_1\lambda_k}{\left(\beta + (m-1)(\lambda_1+\beta/n)\right)^2} &  m> \frac{\beta}{\lambda_1-\beta/n}+1
 \end{matrix}\right.\label{approx-g}
 \end{align}
\end{theorem}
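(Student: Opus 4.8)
The plan is to evaluate $g(m,\hat\eta(m))$ directly, reusing the structural facts of Lemma~\ref{lem:max-g} and the explicit eigenvalue formula from Theorem~\ref{thm:upper-bd}, rather than re-optimizing over $\eta$. The starting observation is that $\hat\eta(m)$ is exactly the optimal step size $\eta^*(m)$ of Theorem~\ref{thm:opt-step-size} with every occurrence of $\lambda_k$ replaced by the upper estimate $\beta/n$; indeed $\beta(1+(m-1)/n)=\beta+(m-1)(\beta/n)$, so the two branches of $\hat\eta(m)$ are obtained from the two branches of $\eta^*(m)$ under the substitution $\lambda_k\mapsto\beta/n$. Since the hypothesis is $\lambda_k\le\beta/n$, this substitution only enlarges the relevant denominators, and every comparison we need will collapse to this single inequality.

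First I would show that in both branches the maximizer of $g(\lambda;m,\hat\eta(m))$ over $\lambda\in[\lambda_k,\lambda_1]$ is attained at $\lambda=\lambda_k$, so that Lemma~\ref{lem:max-g} gives $g(m,\hat\eta(m))=g(\lambda_k;m,\hat\eta(m))$. By that lemma this is equivalent to checking $\hat\eta(m)\le\eta_0(m)=\tfrac{2m}{\beta+(m-1)(\lambda_1+\lambda_k)}$. In the large-batch branch this is immediate, since $\hat\eta(m)=\tfrac{2m}{\beta+(m-1)(\lambda_1+\beta/n)}$ and $\beta/n\ge\lambda_k$ makes its denominator at least that of $\eta_0(m)$. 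In the small-batch branch the inequality $\hat\eta(m)\le\eta_0(m)$ rearranges to $(m-1)(\lambda_1+\lambda_k-2\beta/n)\le\beta$, and since $\lambda_k\le\beta/n$ the left side is at most $(m-1)(\lambda_1-\beta/n)\le\beta$, the last step being exactly the branch threshold $m\le\tfrac{\beta}{\lambda_1-\beta/n}+1$.

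Next I would substitute $\hat\eta(m)$ into $g(\lambda_k;m,\eta)=(1-\eta\lambda_k)^2+\tfrac{\eta^2\lambda_k}{m}(\beta-\lambda_k)$ and simplify. Writing $A\triangleq\beta+(m-1)(\beta/n)$ in the small-batch case (so $\hat\eta=m/A$) and $B\triangleq\beta+(m-1)(\lambda_1+\beta/n)$ in the large-batch case (so $\hat\eta=2m/B$), a short expansion shows that proving $g(\lambda_k;m,\hat\eta(m))\le\hat g(m)$ is, after cancelling the common positive factor $\tfrac{m\lambda_k}{A}$ (resp.\ $\tfrac{4m\lambda_k}{B^2}$), equivalent to $(m-1)\lambda_k+\beta\le A$ in the first case and to $(m-1)(\lambda_1+\lambda_k)+\beta\le B$ in the second. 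Both reduce to $(m-1)\lambda_k\le(m-1)(\beta/n)$, i.e.\ to the hypothesis $\lambda_k\le\beta/n$, which therefore closes the argument in both branches (and trivially at $m=1$).

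The computations are all elementary, so the only genuine care is bookkeeping: one must verify the regime claim—that despite using the suboptimal, $\lambda_k$-free step size $\hat\eta(m)$, the worst-case eigenvalue is still $\lambda_k$ rather than $\lambda_1$—before invoking the explicit form of $g(m,\cdot)$. I expect this regime check, namely confirming $\hat\eta(m)\le\eta_0(m)$ using the branch thresholds together with $\lambda_k\le\beta/n$, to be the main obstacle, since it is what legitimizes reading $\hat g(m)$ off the $\lambda_k$-branch; once it is in place, the bound $g(\lambda_k;m,\hat\eta(m))\le\hat g(m)$ follows mechanically and always bottoms out at $\lambda_k\le\beta/n$.
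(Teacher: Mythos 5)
Your proposal is correct and takes essentially the same route as the paper's own (terse) proof: you verify $\hat{\eta}(m)\leq \eta_0(m)$ using $\lambda_k\leq \beta/n$ together with the branch threshold, so that Lemma~\ref{lem:max-g} gives $g\left(m,\hat{\eta}(m)\right)=g\left(\lambda_k; m,\hat{\eta}(m)\right)$, and then substitute and upper-bound, with every comparison reducing to $\lambda_k\leq\beta/n$. Your write-up simply makes explicit the algebra the paper compresses into ``substituting $\hat{\eta}(m)$ in $g^{\rom{1}}(m,\eta)$, then upper-bounding the resulting expression,'' and both of your cancellation identities check out.
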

\begin{proof}
The proof easily follows by observing that if $\lambda_k/\beta\leq 1/n$, then $\hat{\eta}(m)$ lies in the feasible region for the minimization problem in (\ref{eq:opt-step-def}). In particular, $\hat{\eta}(m)\leq \eta_0(m)$, where $\eta_0(m)$ is as defined in Lemma~\ref{lem:max-g}. The upper bound $\hat{g}\left(m\right)$ follows from substituting $\hat{\eta}(m)$ in $g^{\rom{1}}(m, \eta)$ defined in Lemma~\ref{lem:max-g}, then upper-bounding the resulting expression. 
\end{proof}

It is easy to see that the convergence rate $\hat{g}(m)$ resulting from the step size $\hat{\eta}$ is at most factor $1+O(m/n)$ slower than the optimal rate $g^*(m)$. This factor is negligible when $m \ll n$. 
Since we expect $n \gg \beta$, we can further approximate $\hat{\eta}(m)\approx m/\beta$ when $m\lessapprox \beta/\lambda_1$ and $\hat{\eta} \approx \frac{2m}{\beta +(m-1)\lambda_1}$ when $m \gtrapprox \beta/\lambda_1$.




\subsection{Batch size selection}
In this section, we will derive the optimal batch size given a fixed computational budget in terms of the computational efficiency defined as the number of gradient  computations to obtain a fixed desired accuracy.
We will show that single-point batch is in fact optimal in that setting. Moreover, we will show that any  mini-batch size in the range from $1$ to a certain constant $m^*$ independent of $n$, is nearly optimal in terms of gradient computations. 
Interestingly, for values beyond $m^*$ the computational efficiency drops sharply. This result has direct implications for the batch size selection in parallel computation. 
 


\subsubsection{Optimality of a single-point batch (standard SGD)}\label{sec:opt-single-pt}

Suppose we are limited by a fixed number of gradient computations. Then, what would be the batch size that yields the least approximation error? Equivalently, suppose we are required to achieve a certain target accuracy $\epsilon$ (i.e., want to reach parameter $\hat{\bw}$ such that $\cL(\hat{\bw})-\cL(\bw^*)\leq \epsilon$). Then, again, what would be the optimal batch size that yields the least amount of computation. 

Suppose we are being charged a unit cost for each gradient computation, then it is not hard to see that the cost function we seek to minimize is $g^*(m)^{\frac{1}{m}}$, where $g^*(m)$ is as given by Theorem~\ref{thm:opt-step-size}. To see this, note that for a batch size $m$, the number of iterations to reach a fixed desired accuracy is $t(m)=\frac{\mathsf{constant}}{\log(1/g^*(m))}$. Hence, the computation cost is $m\cdot t(m)=\frac{\mathsf{constant}}{\log(1/g^*(m))^{1/m}}$. Hence, minimizing the computation cost is tantamount to minimizing $g^*(m)^{1/m}.$ The following theorem shows that the exact minimizer is $m=1$. Later, we will see that \emph{any} value for $m$ from $2$ to $\approx \beta/\lambda_1$ is actually not far from optimal. So, if we have cheap or free computation available (e.g., parallel computation), then it would make sense to choose $m\approx \beta/\lambda_1$. We will provide more details in the following subsection. 

\begin{theorem}[Optimal batch size under a limited computational budget]\label{thm:opt-batch-size}
When we are charged a unit cost per gradient computation, the batch size that minimizes the overall computational cost required to achieve a fixed accuracy (i.e., maximizes the computational efficiency) is $m=1$. Namely,
$$
\arg\min_{m \in \mathbb{N}} g^*(m)^{\frac{1}{m}} = 1
$$
\end{theorem}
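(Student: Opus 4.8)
The plan is to reduce the theorem to the single inequality $g^*(m)\ge g^*(1)^m$ for every integer $m\ge 1$, with strict inequality once $m\ge 2$. Taking $m$-th roots then gives $g^*(m)^{1/m}\ge g^*(1)=g^*(1)^{1/1}$, with equality only at $m=1$, which is exactly the assertion $\arg\min_m g^*(m)^{1/m}=1$. As a first step I would record the base value: plugging $m=1$ into the first (linear-scaling) branch of (\ref{eq:opt-g}) gives $g^*(1)=1-\lambda_k/\beta=(\beta-\lambda_k)/\beta$.

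The central device is a \emph{universal lower bound} on $g^*(m)$ that is valid in both branches of (\ref{eq:opt-g}), so that I never manipulate the awkward saturation-regime expression directly. Since $g(m,\eta)=\max_{\lambda\in[\lambda_k,\lambda_1]}g(\lambda;m,\eta)\ge g(\lambda_k;m,\eta)$ pointwise in $\eta$, and since $g^*(m)$ is the minimum of $g(m,\eta)$ over the feasible interval $0<\eta<\eta_1(m)$ from (\ref{eq:opt-step-def}), I have
$$g^*(m)\ \ge\ \min_{0<\eta<\eta_1(m)}g(\lambda_k;m,\eta)\ \ge\ \inf_{\eta\in\re}g(\lambda_k;m,\eta).$$
The map $\eta\mapsto g(\lambda_k;m,\eta)=(1-\eta\lambda_k)^2+\tfrac{\eta^2\lambda_k}{m}(\beta-\lambda_k)$ is a convex parabola in $\eta$ (its leading coefficient $\lambda_k^2+\lambda_k(\beta-\lambda_k)/m$ is positive because $\beta>\lambda_k$), so its unconstrained minimum sits at $\eta=\frac{m}{\beta+(m-1)\lambda_k}$ and equals $\frac{\beta-\lambda_k}{\beta+(m-1)\lambda_k}$. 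Hence $g^*(m)\ge \frac{\beta-\lambda_k}{\beta+(m-1)\lambda_k}$ for every $m$; this matches the linear-scaling branch of (\ref{eq:opt-g}) exactly, so nothing is wasted in that regime.

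It then remains to prove the purely scalar inequality
$$\frac{\beta-\lambda_k}{\beta+(m-1)\lambda_k}\ \ge\ \left(\frac{\beta-\lambda_k}{\beta}\right)^{m}.$$
Writing $r\triangleq\lambda_k/\beta\in(0,1)$ and cancelling one factor $(1-r)>0$, this is equivalent to $(1-r)^{m-1}\bigl(1+(m-1)r\bigr)\le 1$, which I would get from Bernoulli's inequality via $(1-r)^{-(m-1)}=\bigl(1+\tfrac{r}{1-r}\bigr)^{m-1}\ge 1+(m-1)\tfrac{r}{1-r}\ge 1+(m-1)r$. Combining this with the universal lower bound gives $g^*(m)\ge g^*(1)^m$, and both ``$\ge$'' steps are strict as soon as $m\ge 2$ (Bernoulli is strict there, and $r/(1-r)>r$), which yields uniqueness of the minimizer.

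The only place needing genuine care is the saturation regime $m>\beta/(\lambda_1-\lambda_k)+1$, where the closed form of $g^*(m)$ is unpleasant to compare termwise with $(1-r)^m$; the point of the argument above is precisely to sidestep that comparison. The main obstacle is therefore the \emph{observation} that dominating $g^*(m)$ from below by the unconstrained optimum of the $\lambda=\lambda_k$ slice collapses both branches into one elementary Bernoulli estimate. In writing it up I would verify the two directional subtleties: that replacing the maximizing eigenvalue by $\lambda_k$ only weakens the bound the correct way, and that passing from the constrained minimum over $(0,\eta_1(m))$ to the infimum over all of $\re$ is legitimate because enlarging the feasible set can only lower the minimum.
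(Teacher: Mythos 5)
Your proof is correct, and it reaches the paper's conclusion by a recognizably different route. The paper works branchwise with the closed forms in (\ref{eq:opt-g}): it extends the linear-scaling branch to $g_1(m)=1-\frac{m\lambda_k}{\beta+(m-1)\lambda_k}$ and the saturation branch to $g_2(m)$, proves in Lemma~\ref{lem:proof-opt-batch1} that $g_1(m)^{1/m}$ is strictly increasing (via the calculus fact that $\frac{1}{x}\ln(1+ux)$ is decreasing for $u>0$, after simplifying $g_1(m)=\frac{\tau}{\tau+\bar{\tau}m}$), and proves in Lemma~\ref{lem:proof-opt-batch2} the algebraic comparison $g_1(m)\leq g_2(m)$ by reducing it to the complete square $\left((m-1)(\lambda_1-\lambda_k)-\beta\right)^2\geq 0$. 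Your argument replaces both lemmas: instead of comparing the two closed forms, you bound $g^*(m)=\min_{\eta}\max_{\lambda}g(\lambda;m,\eta)$ from below by $\inf_{\eta\in\re}g(\lambda_k;m,\eta)=\frac{\beta-\lambda_k}{\beta+(m-1)\lambda_k}=g_1(m)$ directly from the variational definition — which is exactly the content of Lemma~\ref{lem:proof-opt-batch2} in the regime where $g^*=g_2$, but obtained without ever touching the saturation-branch formula — and you replace the log-derivative monotonicity of Lemma~\ref{lem:proof-opt-batch1} by the discrete Bernoulli estimate $(1-r)^{m-1}\bigl(1+(m-1)r\bigr)\leq 1$ with $r=\lambda_k/\beta$, which is precisely the endpoint comparison $g_1(m)\geq g_1(1)^m$ that the theorem needs. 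Both ingredients check out: the parabola computation (vertex at $\eta=\frac{m}{\beta+(m-1)\lambda_k}$, minimum value $\frac{\beta-\lambda_k}{\beta+(m-1)\lambda_k}$) is right, the directions of the two relaxations (dominating the max by the $\lambda_k$ slice, enlarging the feasible set for $\eta$) are correct, and your strictness bookkeeping for $m\geq 2$ (for $m=2$ Bernoulli itself is an equality, but $r/(1-r)>r$ supplies the strict step) gives uniqueness of the minimizer, which the paper obtains instead from strict monotonicity of $g_1(m)^{1/m}$. What each approach buys: the paper's explicit Lemma~\ref{lem:proof-opt-batch2} records the quantitative relation between the two branches (and its monotonicity statement is slightly stronger than the endpoint comparison), while your min-max lower bound is more robust — it collapses both regimes into a single elementary inequality and would survive any change to the saturation-regime closed form as long as $g^*$ retains its variational definition in (\ref{eq:opt-step-def}).
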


The detailed and precise proof is deferred to the appendix.
Here, we give a less formal but more intuitive argument based on a reasonable approximation for $g^*(m)$. Such approximation in fact is valid in most of the practical settings. In the full version of this paper, we give an exact and detailed analysis. 
\noindent Note that $g^*(m)$ can be written as $1-\frac{\lambda_k}{\beta}s(m)$, where $s(m)$ 
is given by
\begin{align}
s(m)&=\left\{\begin{matrix}
 \frac{m}{1 + (m-1)\frac{\lambda_k}{\beta}} &  m\leq \frac{\beta}{\lambda_1-\lambda_k}+1 \\ 
 \frac{4m(m-1) \lambda_1}{\beta\left(1 + (m-1)\frac{\lambda_1+\lambda_k}{\beta}\right)^2} &  m> \frac{\beta}{\lambda_1-\lambda_k}+1
 \end{matrix}\right.\label{eq:s(m)}
 \end{align}

\paragraph{Proof outline:} Note that $s(m)$ defined in (\ref{eq:s(m)}) indeed captures the speed-up factor we gain in convergence relative to standard SGD (with $m=1$) where the convergence is dictated by $\lambda_k/\beta$. Now, note that $g^*(m)^{1/m}\approx e^{-\lambda_k/ \beta \cdot s(m)/m}$. This approximation becomes very accurate when $\lambda_k \ll \lambda_1$, which is typically the case for most of the practical settings where $\lambda_1/\lambda_k\approx n$ and $n$ is very large. Assuming that this is the case (for the sake of this intuitive argument), minimizing $g^*(m)^{1/m}$ becomes equivalent to maximizing $s(m)/m$. Now, note that when $m\leq \frac{\beta}{\lambda_1-\lambda_k}+1,$ then $s(m)/m= \frac{1}{1 + (m-1)\frac{\lambda_k}{\beta}}$, which is decreasing in $m$. Hence, for $m\leq \frac{\beta}{\lambda_1-\lambda_k}+1,$ we have $s(m)/m\leq s(1)=1$. On the other hand, when $m > \frac{\beta}{\lambda_1-\lambda_k}+1,$ we have 
$$s(m)/m=\frac{4(m-1) \lambda_1}{\beta\left(1 + (m-1)\frac{\lambda_1+\lambda_k}{\beta}\right)^2},$$
which is also decreasing in $m$, and hence, it's upper bounded by its value at $m=m^*\triangleq \frac{\beta}{\lambda_1-\lambda_k}+1$. By direct substitution and simple cancellations, we can show that $s(m^*)/{m^*}\leq \frac{\lambda_1-\lambda_k}{\lambda_1}<1$. Thus, $m=1$ is optimal. 

One may wonder whether the above result is valid if the near-optimal step size $\hat{\eta}(m)$ (that does not depend on $\lambda_k$) is used. That is, one may ask whether the same optimality result is valid if the near optimal error rate function $\hat{g}(m)$ is used instead of $g^*(m)$ in Theorem~\ref{thm:opt-batch-size}. Indeed, we show that the same optimality remains true even if computational efficiency is measured with respect to $\hat{g}(m)$. This is formally stated in the following theorem. 

\begin{theorem}\label{thm:opt-batch-size-for-approx-g}
When the near-optimal step size $\hat{\eta}(m)$ is used (and assuming that $\lambda_k/\beta\leq 1/n)$, the batch size that minimizes the overall computational cost required to achieve a fixed accuracy is $m=1$. Namely,
$$
\arg\min_{m \in \mathbb{N}} \hat{g}(m)^{\frac{1}{m}} = 1
$$
\end{theorem}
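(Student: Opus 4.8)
The plan is to follow the route used for the intuitive argument behind Theorem~\ref{thm:opt-batch-size}, now with the near-optimal rate $\hat{g}(m)$ from~(\ref{approx-g}) in place of $g^*(m)$. First I would write $\hat{g}(m)=1-\frac{\lambda_k}{\beta}\,\hat{s}(m)$, reading off the speed-up function
\begin{align*}
\hat{s}(m)=\left\{\begin{matrix}
\dfrac{m}{1+(m-1)/n} & m\leq \dfrac{\beta}{\lambda_1-\beta/n}+1\\[1.2ex]
\dfrac{4m(m-1)\lambda_1}{\beta\left(1+(m-1)(\lambda_1+\beta/n)/\beta\right)^2} & m> \dfrac{\beta}{\lambda_1-\beta/n}+1.
\end{matrix}\right.
\end{align*}
The key structural observation is that $\hat{s}(m)$ is exactly the function $s(m)$ of~(\ref{eq:s(m)}) with $\lambda_k$ replaced by $\beta/n$ in every \emph{secondary} position (the additive terms inside the denominators and the regime threshold), while the leading prefactor $\lambda_k/\beta$ is left intact. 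Consequently all monotonicity computations in the proof of Theorem~\ref{thm:opt-batch-size} carry over with $\lambda_k\mapsto\beta/n$, which is why the optimizer is again expected to be $m=1$.

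Next I would reduce the problem to maximizing $\hat{s}(m)/m$. Since $\hat{g}(m)^{1/m}\approx \exp\!\left(-\tfrac{\lambda_k}{\beta}\cdot\tfrac{\hat{s}(m)}{m}\right)$ — an approximation made accurate precisely because $\lambda_k/\beta\leq 1/n$ keeps $\tfrac{\lambda_k}{\beta}\hat{s}(m)$ small — minimizing $\hat{g}(m)^{1/m}$ amounts to maximizing $\hat{s}(m)/m$. In the first regime $\hat{s}(m)/m=\frac{1}{1+(m-1)/n}$ is strictly decreasing in $m$, hence bounded by $\hat{s}(1)/1=1$ with equality only at $m=1$. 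In the second regime, writing $u=m-1$ and $c=(\lambda_1+\beta/n)/\beta$, we have $\hat{s}(m)/m=\frac{4\lambda_1}{\beta}\cdot\frac{u}{(1+cu)^2}$, whose maximum over $u>0$ is at $u=1/c$; since the regime only starts at $u=\frac{\beta}{\lambda_1-\beta/n}>1/c$, the ratio is already past its peak and decreasing, so it is bounded by its value at the left endpoint $m=\frac{\beta}{\lambda_1-\beta/n}+1$, where direct simplification gives $\hat{s}(m)/m=1-\frac{\beta}{n\lambda_1}<1$. Thus $\hat{s}(m)/m\leq 1$ for all $m$ with equality only at $m=1$, yielding $\arg\min_m\hat{g}(m)^{1/m}=1$.

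The main obstacle is upgrading this heuristic to the exact inequality $\hat{g}(m)^{1/m}\geq \hat{g}(1)$ needed for a fully rigorous (appendix) proof; equivalently, with $a=\lambda_k/\beta$ and $D(m)=\tfrac1m\log\tfrac{1}{\hat g(m)}-\log\tfrac1{\hat g(1)}$, one must show $D(m)\leq 0$ for all $m\geq 2$. A term-by-term comparison of the two logarithmic series does \emph{not} suffice, because the convexity of $y\mapsto-\log(1-y)$ makes the quadratic term work against us:
\begin{align*}
D(m)=-\frac{a\,(m-\hat s(m))}{m}+\frac{a^2\,(\hat s(m)^2-m)}{2m}+O(a^3),
\end{align*}
where the linear term is negative but the quadratic term is positive for $m\geq 2$. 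The inequality therefore hinges on the negative linear term dominating, and this is exactly where $\lambda_k/\beta\leq 1/n$ enters: in the first regime $m-\hat s(m)=\frac{m(m-1)}{n+m-1}$ while $\hat s(m)^2-m\leq m(m-1)$, so the two displayed terms balance when $\frac{1}{n+m-1}\geq\frac a2$, which holds with room to spare since $a\leq 1/n$ and $m\lesssim\beta/\lambda_1\ll n$. I would finish by verifying that the $O(a^3)$ tail cannot reverse this margin, and by treating the second regime separately, where $\hat s(m)/m\leq 1-\frac{\beta}{n\lambda_1}$ is already bounded away from $1$ and the conclusion $D(m)\leq 0$ is immediate.
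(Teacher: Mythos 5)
Your heuristic computations are correct and faithfully track the paper's informal outline for Theorem~\ref{thm:opt-batch-size} (the speed-up function $\hat{s}(m)$, its monotonicity in each regime, and the endpoint value $1-\frac{\beta}{n\lambda_1}$ all check out), but the rigorous half of your plan has a genuine gap, and it is exactly the half the paper handles differently. The paper proves Theorem~\ref{thm:opt-batch-size} exactly, with no expansion in $a=\lambda_k/\beta$: it extends the two branches of the rate to all $m\geq 1$ as $g_1,g_2$, shows $g_1(m)^{1/m}$ is strictly increasing in closed form (Lemma~\ref{lem:proof-opt-batch1}, via $\frac{1}{m}\ln(1/g_1(m))=\frac{1}{m}\ln(1+um)$ with $u>0$), and shows $g_1(m)\leq g_2(m)$ pointwise by exhibiting a complete square (Lemma~\ref{lem:proof-opt-batch2}); the present theorem is then claimed ``along similar lines.'' Both steps do transfer to $\hat{g}$: using $\lambda_k\leq\beta/n$, the first-branch inequality $\hat{g}_1(m)\geq\hat{g}_1(1)^m$ reduces to Bernoulli's inequality $\left(1+\frac{1}{n-1}\right)^m\geq 1+\frac{m}{n-1}$ (equivalently, $m\mapsto\ln(1/\hat{g}_1(m))$ is concave and vanishes at $m=0$), and $\hat{g}_1(m)\leq\hat{g}_2(m)$ is again $(A-B)^2\geq 0$ with $A=\beta(n+m-1)/n$ and $B=(m-1)\lambda_1$. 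That disposes of every $m\in\mathbb{N}$ exactly, with no tail to control.

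Your truncated-series route, as stated, does not close. First, the $O(a^3)$ tail you defer is a sum of positive terms in the regime at issue (each $\frac{a^j}{j}\bigl(\hat{s}(m)^j/m-1\bigr)$ with $j\geq 3$ is positive once $\hat{s}(m)>m^{1/j}$, which holds for $m\geq 2$ when $m\ll n$), and bounding it is essentially as hard as the original inequality $a\hat{s}(m)\leq 1-(1-a)^m$; it is not a remainder one can wave off. Second, your second-order dominance condition $\frac{1}{n+m-1}\geq\frac{a}{2}$ at the extreme $a=1/n$ requires $m\leq n+1$, and the hypotheses do not confine the first regime to $[1,n+1]$: its right endpoint $\frac{\beta}{\lambda_1-\beta/n}+1$ exceeds $n+1$ whenever $\lambda_1<2\beta/n$, which nothing in the assumptions rules out (only $\lambda_k/\beta\leq 1/n$ and $\lambda_1<\beta$ are given), so the claimed ``room to spare'' is an appeal to the typical case $\lambda_1\gg\beta/n$ rather than a proof. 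Third, in the second regime the conclusion is not ``immediate'' from the endpoint bound $\hat{s}(m)/m\leq 1-\frac{\beta}{n\lambda_1}$: since $D(m)\leq 0$ is equivalent to $a\hat{s}(m)\leq 1-(1-a)^m<1$, while the endpoint bound only gives $a\hat{s}(m)\leq am\bigl(1-\frac{\beta}{n\lambda_1}\bigr)$, which exceeds $1$ once $m\gtrsim n$, a uniform bound on $\hat{s}(m)/m$ proves nothing for large $m$; you would need the actual decay of $\hat{s}(m)/m$ there, or, far more simply, the pointwise comparison $\hat{g}_2\geq\hat{g}_1$, which settles all $m$ at once and is the route the paper intends.
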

The proof of the above theorem follows similar lines of the proof of Theorem~\ref{thm:opt-batch-size}.

\subsubsection{Near optimal larger batch sizes}

Suppose that several gradient computations can be performed in parallel. Sometimes doubling the number of machines used in parallel can halve the number of iterations needed to reach a fixed desired accuracy.
Such observation has motivated many works to use large batch size with distributed synchronized SGD~\cite{chen2016revisiting, goyal2017accurate, you2017scaling, smith2017don}.
One critical problem in this large batch setting is how to choose the step size. To keep the same covariance, \cite{bottou2016optimization, li2017scaling, hoffer2017train} choose the step size $\eta \sim \sqrt{m}$ for batch size $m$. While \cite{krizhevsky2014one, goyal2017accurate, you2017scaling, smith2017don} have observed that rescaling the step size $\eta \sim m$ works well in practice for not too large $m$.
To explain these observations, we directly connect the parallelism, or the batch size $m$,
to the required number of iterations $t(m)$ defined previously.
It turns out that (a) when the batch size is small, doubling the size will almost halve the required iterations; (b) after the batch size surpasses certain value, increasing the size to any amount would only reduce the required iterations by at most a constant factor.

\begin{figure}[H]
\centering
\includegraphics[width=.5\textwidth]{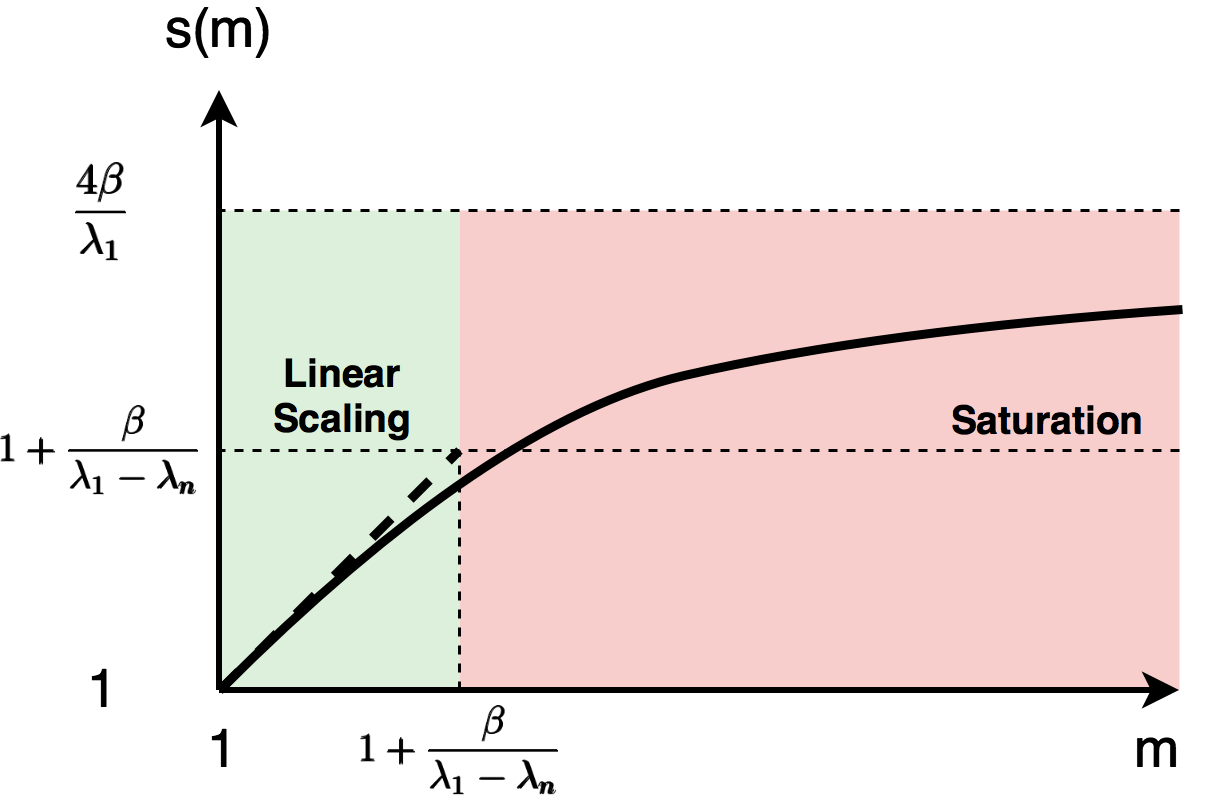}
\caption{Factor of iterations saved: $\frac{t(1)}{t(m)} \approx s(m)$}
\label{fig:optimal-region}
\end{figure}

Our analysis uses the optimal step size and convergence rate in Theorem~\ref{thm:opt-step-size}.
Now consider the factor by which we save the number of iterations when increasing the batch size from $1$ to $m$. Using the approximation $g^*(m)^{1/m} \approx e^{-\lambda_k/ \beta~\cdot~s(m)/m}$,
we have $\frac{t(1)}{t(m)} \approx s(m)$, the speed up factor. The change of $s(m)$ is illustrated in Figure~\ref{fig:optimal-region} where two regimes are highlighted:
\newline

\noindent {\bf Linear scaling regime} ($m \leq \frac{\beta}{\lambda_1-\lambda_k}+1$): This is the regime where increasing the batch size $m$ will quickly drive down $t(m)$ needed to reach certain accuracy. When $\lambda_k \ll \lambda_1$, $s(m) \approx m$, which suggests $t(m/2) \approx 2 \cdot t(m)$. In other words, doubling the batch size in this regime will roughly halve the number of iterations needed. 
Note that we choose step size $\eta \leftarrow \frac{m}{\beta + (m - 1)\lambda_k}$. When $\lambda_k\leq \frac{\beta}{n} \ll \lambda_1$, $\eta \sim m$, which is
consistent with the linear scaling heuristic used in \cite{krizhevsky2014one, goyal2017accurate, smith2017don}.
In this case, the largest batch size in the linear scaling regime can be practically calculated through
\begin{equation}\label{eq:m*}
m^* = \frac{\beta}{\lambda_1-\lambda_k}+1
\approx \frac{\beta}{\lambda_1- \beta/n}+1
\approx \frac{\beta}{\lambda_1} + 1
\end{equation}

\noindent {\bf Saturation regime} ($m > \frac{\beta}{\lambda_1-\lambda_k}+1$): Increasing batch size in this regime becomes much less beneficial.
Although $s(m)$ is monotonically increasing, it is upper bounded by $\lim_{m \rightarrow \infty}{s(m)} = \frac{4 \beta}{\lambda_1}$.
In fact, since $t(\frac{\beta}{\lambda_1-\lambda_k}+1) / \lim_{m \rightarrow \infty}{t(m)} < 4$ for small $\lambda_k$,
no batch size in this regime can reduce the needed iterations by a factor of more than 4.

\section{Experimental Results}
\label{sec:expr}

This section will provide empirical evidence for our theoretical results on the effectiveness of mini-batch SGD in the interpolated setting.  We first consider a kernel learning problem, where the parameters $\beta$, $\lambda_1$, and $m^*$ can be computed efficiently (see~\cite{ma2017diving} for details). In all experiments we set the step size  to be $\hat{\eta}$ defined in~(\ref{eq:approx-eta}).


\paragraph{Remark: near optimality of $\hat{\eta}$ in practice.} We observe empirically that increasing the step size from $\hat{\eta}$ to $2\, \hat{\eta}$ consistently leads to divergence, indicating that
$\hat{\eta}$ differs from the optimal step size by at most a factor of 2. This is consistent with
our Theorem~\ref{thm:approx-step-size} on near-optimal step size. 



\subsection{Comparison of SGD with critical mini-batch size $m^*$ to full gradient descent}
\begin{figure}[!ht]
  \centering
  \begin{minipage}[t]{0.3\textwidth}
    \includegraphics[width=\textwidth]
{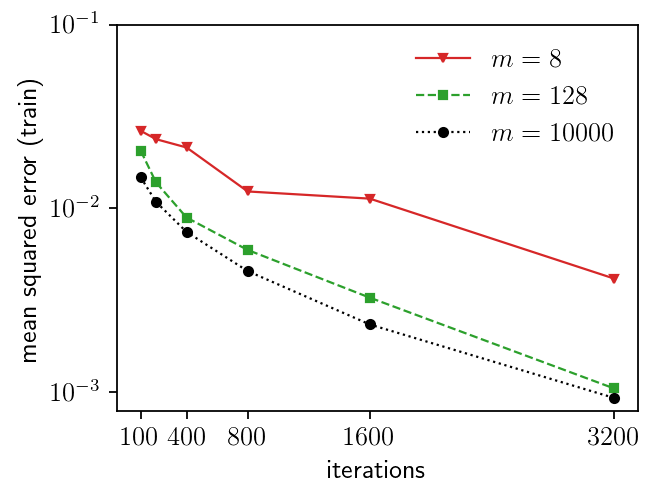}
    \subcaption{MNIST (Gaussian, $\sigma = 5$)\\
$\beta=1, \lambda_1=0.15, m^*\approx 8$}
  \end{minipage}
  \hfill
  \begin{minipage}[t]{0.3\textwidth}
    \includegraphics[width=\textwidth]
{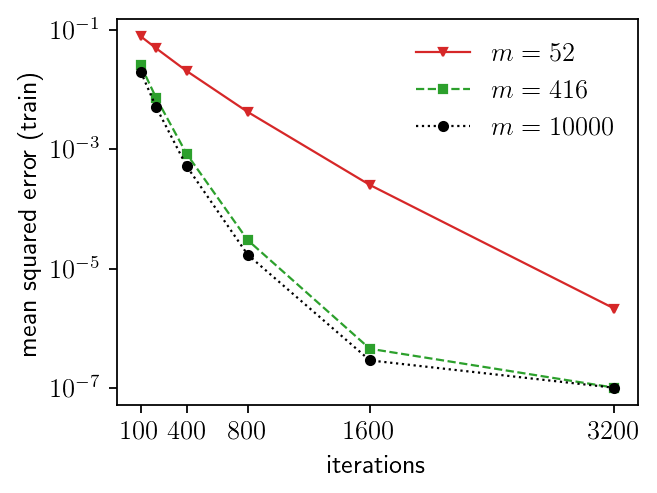}
    \subcaption{HINT-S (EigenPro-Laplace, $\sigma = 20$) $\beta=0.6, \lambda_1=0.012, m^*\approx 52$}
    \label{fig:mbs-iter-hint}
  \end{minipage}
  \hfill
  \begin{minipage}[t]{0.3\textwidth}
    \includegraphics[width=\textwidth]
{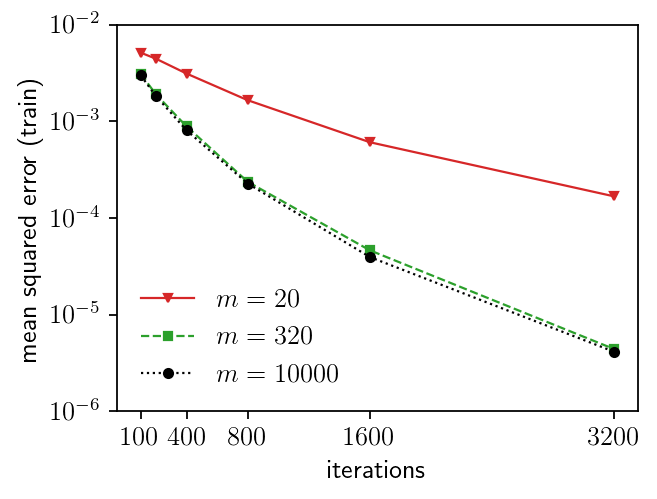}
    \subcaption{TIMIT (Gaussian, $\sigma=11$)\\
    $\beta=1, \lambda_1=0.054,$ $m^*\approx 20$}
  \end{minipage}
  \caption{Comparison of training error ($n = 10^4$) for different mini-batch sizes ($m$) vs. number of iterations}
  \label{fig:mbs-iter}
  \vspace{-3mm}
\end{figure}
Theorem~\ref{thm:opt-step-size} suggests that SGD using batch size $m^*$ defined in~(\ref{eq:m*}) can reach the same error as GD using at most $4$ times the number of iterations.
This is consistent with our experimental results for MNIST, HINT-S~\cite{healy2013algorithm}, and TIMIT, shown in Figure~\ref{fig:mbs-iter}.
Moreover, in line with our analysis, SGD with batch size larger than $m^*$ but still much smaller than the data size,   converges nearly identically to full gradient descent. 

\paragraph{Remark.} Since our analysis is concerned with the training error, only the training error is reported here. For completeness, we report the test error in  Appendix~\ref{sec:expr-test}. As consistently observed in such over-parametrized settings, test error  decreases with the training error.



\subsection{Optimality of batch size $m=1$}
Our theoretical results, Theorem~\ref{thm:opt-batch-size} and Theorem~\ref{thm:opt-batch-size-for-approx-g} show that 
$m=1$ achieves the optimal computational efficiency. Note for a given batch size, the corresponding optimal step size is chosen according to equation~(\ref{eq:approx-eta}). 
The experiments in Figure~\ref{fig:mbs-epoch} show that $m = 1$ indeed  achieves the lowest error for any fixed number of epochs. 

\subsection{Linear scaling and saturation regimes}
\begin{figure}[!t]
\centering 
  \begin{minipage}[t]{0.3\textwidth}
    \includegraphics[width=\textwidth]
{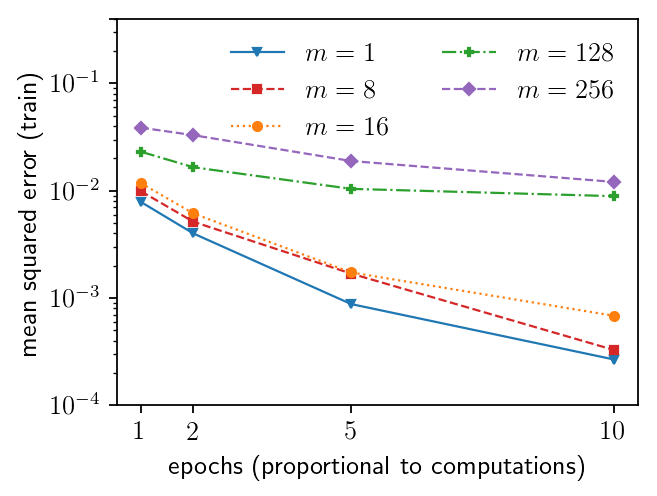}
    \subcaption{MNIST (Gaussian, $\sigma = 5$)\\
$\beta=1, \lambda_1=0.15, m^*\approx 8$}
  \end{minipage}
  \hfill
  \begin{minipage}[t]{0.3\textwidth}
    \includegraphics[width=\textwidth]
{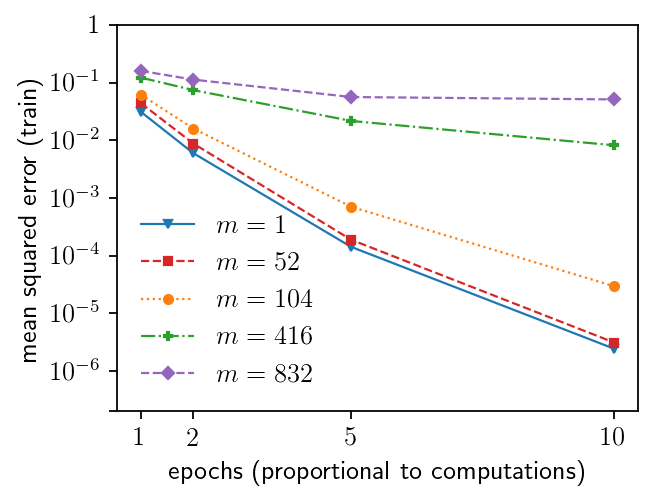}
    \subcaption{HINT-S (EigenPro-Laplace, $\sigma = 20$) $\beta=0.6, \lambda_1=0.012, m^*\approx 52$}  \end{minipage}
  \hfill
  \begin{minipage}[t]{0.3\textwidth}
    \includegraphics[width=\textwidth]
{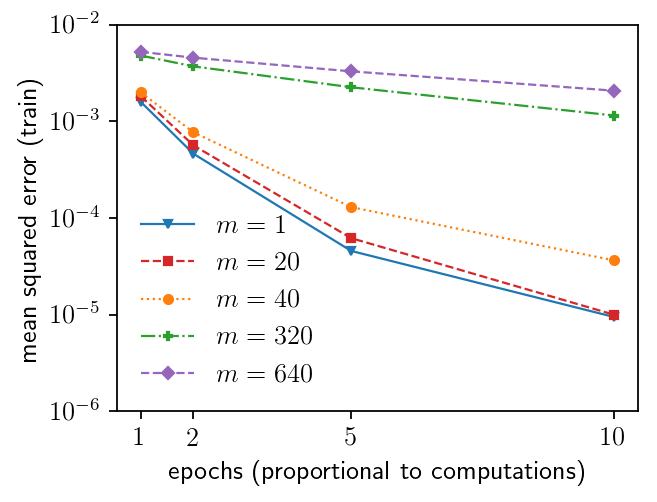}
    \subcaption{TIMIT (Gaussian, $\sigma=11$)\\
    $\beta=1, \lambda_1=0.054,$ $m^*\approx 20$}
  \end{minipage}
  \caption{Comparison of training error ($n = 10^4$) for different mini-batch sizes ($m$) vs. number of epochs (proportional to computation, note for $n$ data points, $n \cdot N_{epoch}= m \cdot N_{iter} $)}
  \label{fig:mbs-epoch}
  \vspace{-5mm}
\end{figure}
In the interpolation regime, Theorem~\ref{thm:opt-batch-size} shows linear scaling for mini-batch sizes up to a (typically small) ``critical'' batch size $m^*$ defined in~(\ref{eq:m*}) followed by the saturation regime.
In Figure~\ref{fig:mbs-epoch} we plot the training error for different batch sizes as a function of the number of epochs. Note that the number of epochs is proportional to the amount of computation measured in terms of gradient evaluations.
The linear scaling regime ($1 \leq m \leq m^*$) is reflected in the small difference in the training error for $m=1$ and $m=m^*$ in Figure~\ref{fig:mbs-epoch} (the bottom three curves. As expected from our theoretical results, they have similar computational efficiency. 
On the other hand, we see that large mini-batch sizes ($m\gg m^*$) require drastically more computations, which is the saturation phenomenon reflected in the top two curves.\\

\noindent{\bf Relation to the ``linear scaling rule'' in neural networks. }
A number of recent large scale neural network methods including~\cite{krizhevsky2014one,chen2016revisiting,goyal2017accurate} use the
``linear scaling rule'' to accelerate training using parallel computation. After the initial ``warmup'' stage to find a good region of parameters, this rule suggest increasing the step size to a level proportional to the mini-batch size $m$. In spite of the wide adoption and effectiveness of this technique, there has been no satisfactory explanation~\cite{krizhevsky2014one} as the usual variance-based analysis suggests increasing the step size by a factor of $\sqrt{m}$ instead of $m$~\cite{bottou2016optimization}. We note that this ``linear scaling'' can be explained by our analysis, assuming that the warmup stage ends up in a neighborhood of an interpolating minimum.



\subsection{Interpolation in kernel methods}
\begin{figure}[!ht]
  \centering
  \begin{minipage}[b]{0.55\textwidth}
    \includegraphics[width=\textwidth]{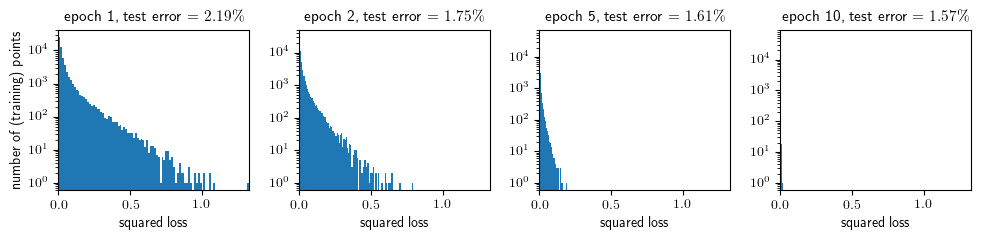}
    \subcaption{MNIST ($\sigma = 10$), $10$ classes}
  \end{minipage}
  \hfill
  \begin{minipage}[b]{0.4125\textwidth}
    \includegraphics[width=\textwidth]{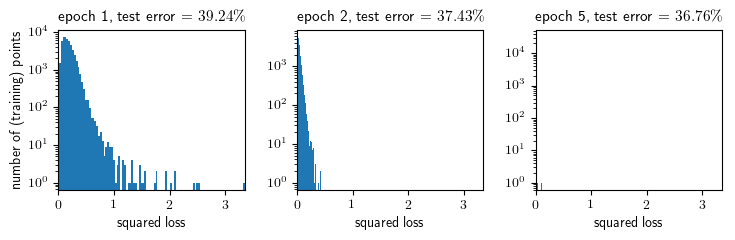}
    \subcaption{TIMIT ($\sigma = 20$), $144$ classes}
  \end{minipage}
  \caption{Histogram of training loss on data points at each epoch}
  \label{fig:interpolation}
\end{figure}
To give additional evidence of the interpolation  in the over-parametrized settings, we provide empirical results showing that this is indeed the case in kernel learning. 
We give two examples: Laplace kernel trained using EigenPro~\cite{ma2017diving} on MNIST~\cite{lecun1998gradient} and on a subset (of size $5 \cdot 10^4$) of TIMIT~\cite{garofolo1993darpa}.
The histograms in Figure~\ref{fig:interpolation} show the number of points with a given loss calculated as $\norm{\by_i - f(\bx_i)}^2$ (on feature vector $\bx_i$ and corresponding binary label vector $\by_i$).  As evident from the histograms, the test loss keeps decreasing as we  converge to an interpolated solution.

\paragraph{Relative computational efficiency is consistent with theory regardless of the rate of convergence.}
\begin{wrapfigure}{r}{0.5\textwidth}
  \vspace{-4mm}
  \centering
  \begin{minipage}[b]{0.5\textwidth}
    \includegraphics[width=\textwidth]{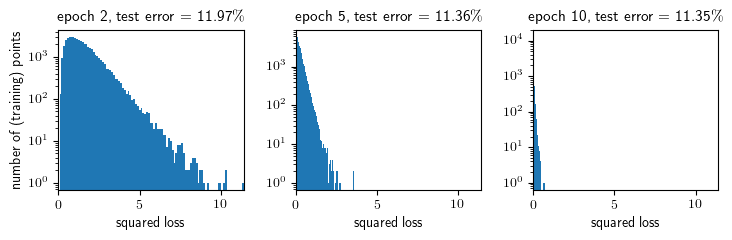}
    \vspace{-7mm}
    \subcaption{EigenPro-Laplace kernel, $\sigma=20$}
    \label{fig:hint-int-elap}
  \end{minipage}
  \begin{minipage}[b]{0.5\textwidth}
    \includegraphics[width=\textwidth]{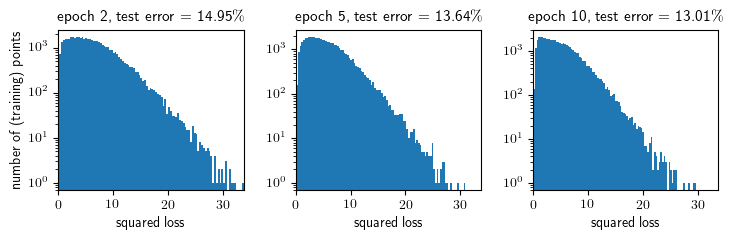}
    \vspace{-7mm}
    \subcaption{Gaussian kernel, $\sigma = 16$}
    \label{fig:hint-int-gaus}
  \end{minipage}
  \vspace{-7mm}
  \caption{Histogram of training loss on ($5 \cdot 10^4$) subsamples of HINT-S}
  \label{fig:hint-int}
  \vspace{-3mm}
\end{wrapfigure}
It is interesting to observe that even when  SGD is slow to converge to the interpolated
\begin{wrapfigure}{c}{\textwidth}
  \vspace{-2mm}
  \centering
  \captionsetup[subfigure]{justification=centering}
  \begin{minipage}[t]{0.35\textwidth}
    \includegraphics[width=\textwidth]
{fig/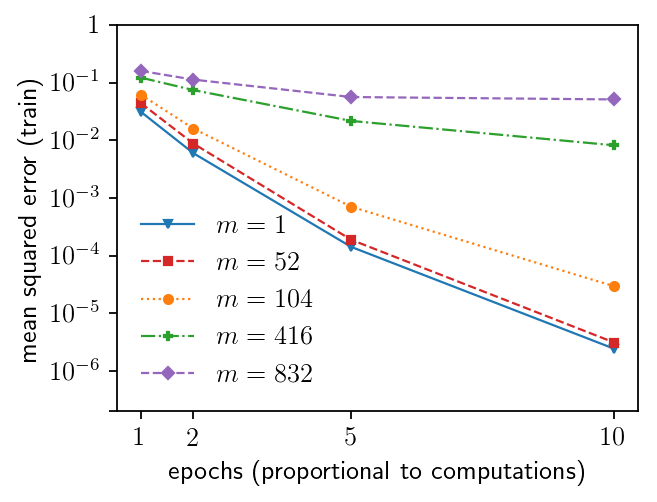}
\vspace{-7mm}
    \subcaption{
    EigenPro-Laplace, $m^* \approx 52$}
    \label{fig:hint-gaus-iter}
  \end{minipage}
  \begin{minipage}[t]{0.35\textwidth}
    \includegraphics[width=\textwidth]
{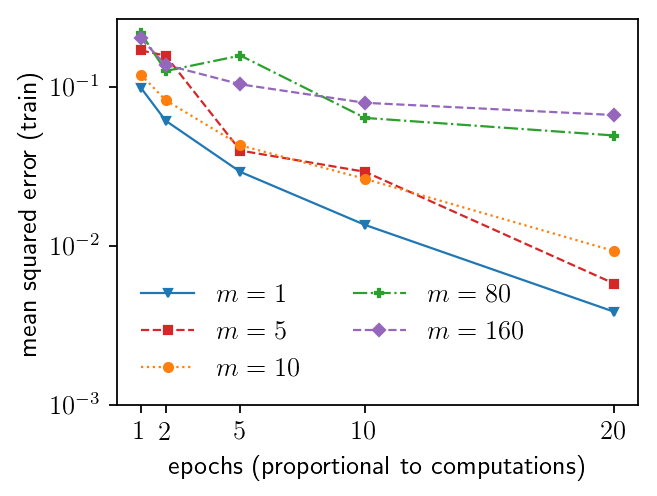}
    \vspace{-7mm}
    \subcaption{Gaussian, $m^* \approx 4$}
    \label{fig:hint-gaus-epoch}
  \end{minipage}
  \vspace{-3mm}
  \caption{Comparison of training error ($n = 10^4$) on HINT-S ($\sigma=20$) using different kernels}
  \label{fig:hint-gaus}
  \vspace{-3mm}
\end{wrapfigure}
solution, our theoretical bounds  still accurately describe the relative efficiency of different mini-batch sizes. 
We examine two different settings: interpolation with Laplace kernel trained using EigenPro~\cite{ma2017diving} on HINT-S (Figure~\ref{fig:hint-int-elap}) and with Gaussian kernel in Figure~\ref{fig:hint-int-gaus}. As clear from the figures Laplace kernel converges to the interpolated solution much faster than the Gaussian. 
However, as our experiments depicted in Figure~\ref{fig:hint-gaus} show, relative computational efficiency of different batch sizes for these two settings is very similar.
As before, we plot the training error against the number of epochs (which is proportional to computation) for different batch sizes. 
Note that while the scale of the error is very different for these two settings, the profiles of the curves are remarkably similar.


\clearpage
\bibliographystyle{alpha}
\bibliography{ref}

\clearpage
\appendix

\section{Proof of Claim~\ref{claim:eigen-decomp}}
Let $\{\be_1, \ldots, \be_d\}$ denote the eigen-basis of $H$ corresponding to eigenvalues $\lambda_1\geq\cdots\geq\lambda_k>0=\lambda_{k+1}=\cdots =\lambda_d$. For every $i\in \{1, \ldots, n\},$ let $\bx_i=\sum_{j=1}^d \alpha_{i,j}\be_j$ be the expansion of $\bx_i$ w.r.t. the eigen-basis of $H$. 

Observe that for any $k+1\leq\ell\leq d,$
\begin{align}
0=\lambda_{\ell}&=\be_{\ell}^T H\be_{\ell}=\frac{1}{n}\sum_{i=1}^n\alpha_{i,\ell}^2\nonumber
\end{align}
where the last equality follows from expanding each $\bx_i$ w.r.t. the eigen-basis of $H$. Thus, 
\begin{align}
\forall~ \ell\in \{k+1, \ldots, d\}, \alpha_{i, \ell}=0~~ \forall i\in \{1, \ldots, n\}.\label{eq:expansion-1} 
\end{align}

Fix any $\be_{r}\in \{\be_1, \ldots, \be_d\}$. Fix a collection $\{\tilde{\bx}_1,\ldots, \tilde{\bx}_m\}\subset \{\bx_1, \ldots, \bx_n\}$. Let $H_m = \frac{1}{m} \sum_{i=1}^m \tilde{\bx}_i \tilde{\bx}_i^T$. Now, from (\ref{eq:expansion-1}), we have 
\begin{align}
H_m\be_{r}&=\left\{\begin{matrix}
 \frac{1}{m}\sum_{i=1}^m \sum_{j=1}^k\tilde{\alpha}_{i,j}\tilde{\alpha}_{i,r}\be_j & 1\leq r\leq k\\ 
 0 & k+1\leq r\leq d
 \end{matrix}\right.\label{eq:expansion-2}
\end{align}
where $\tilde{\alpha}_{i, j}$ denotes the $j$-th coefficient of the expansion of $\tilde{\bx}_i$ w.r.t. the eigen-basis of $H$. 

The proof immediately follows from (\ref{eq:expansion-2}) since for any $\bu\in\HH$, we can write $\bu=\bP_{\bu}+\bQ_{\bu}$ where $\bP_{\bu}$ and $\bQ_{\bu}$ denote the projections of $\bu$ onto $\mathsf{Span}\{\be_1, \ldots, \be_k\}$ and $\mathsf{Span}\{\be_{k+1}, \ldots, \be_d\}$, respectively. Hence, (\ref{eq:expansion-2}) implies that $H_m \bP_{\bu}\in\mathsf{Span}\{\be_1, \ldots, \be_k\}$ and $H_m \bQ_{\bu}=0$, which proves the claim.

\section{Proof of Theorem \ref{thm:opt-batch-size}}
Here, we will provide an exact analysis for the optimality of batch size $m=1$ for the cost function $g^*(m)^{1/m}$, which, as discussed in Section~\ref{sec:opt-single-pt}, captures the total computational cost required to achieve any fixed target accuracy (in a model with no parallel computation). 

We prove this theorem by showing that $g^*(m)^{\frac{1}{m}}$ is strictly increasing for $m\geq 1$. We do this via the following two simple lemmas. First, we introduce the following notation. 

Let 
$$g_1(m)=1 - \frac{m \lambda_k}{\beta + (m-1)\lambda_k} , ~m\geq 1$$
That is, $g_1(m)$ is an extension of $g^*(m), ~m \in [1, \frac{\beta}{\lambda_1-\lambda_k}+1]$ (given by the first expression in (\ref{eq:opt-g}) in Theorem~\ref{thm:opt-step-size}) to all $m\geq 1.$

Let $g_2(m)$ denote the extension of $g^*(m), ~m > \frac{\beta}{\lambda_1-\lambda_k}+1$ (given by the second expression in (\ref{eq:opt-g}) in Theorem~\ref{thm:opt-step-size}) to all $m\geq 1$. That is, 
$$g_2(m)=1 - 4\frac{m(m-1) \lambda_1\lambda_k}{\left(\beta + (m-1)(\lambda_1+\lambda_k)\right)^2}, ~ m\geq 1.$$

\begin{lemma}\label{lem:proof-opt-batch1}
$g_1(m)^{\frac{1}{m}}$ is strictly increasing for $m\geq 1$. 
\end{lemma}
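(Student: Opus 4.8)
The plan is to show that $g_1(m)^{1/m}$ is strictly increasing by taking logarithms and analyzing the derivative of $\phi(m) \triangleq \frac{1}{m}\log g_1(m)$ with respect to $m$, treated as a continuous variable on $[1,\infty)$. First I would simplify $g_1(m)$ itself. Writing $g_1(m) = 1 - \frac{m\lambda_k}{\beta + (m-1)\lambda_k}$ and combining over a common denominator gives
\begin{equation*}
g_1(m) = \frac{\beta + (m-1)\lambda_k - m\lambda_k}{\beta + (m-1)\lambda_k} = \frac{\beta - \lambda_k}{\beta + (m-1)\lambda_k}.
\end{equation*}
This is the key simplification: the numerator $\beta - \lambda_k$ is a positive constant (recall $\beta > \lambda_1 \geq \lambda_k$), so $g_1(m)$ has the clean form $\frac{c}{a + bm}$ with $c = \beta - \lambda_k$, $a = \beta - \lambda_k$, $b = \lambda_k > 0$. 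In particular $g_1(m) = \frac{\beta - \lambda_k}{(\beta - \lambda_k) + m\lambda_k} \in (0,1)$, so its logarithm is negative and the analysis is well-posed.

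Next I would compute $\phi'(m)$ where $\phi(m) = \frac{1}{m}\log g_1(m)$. By the quotient/product rule,
\begin{equation*}
\phi'(m) = \frac{m \cdot \frac{g_1'(m)}{g_1(m)} - \log g_1(m)}{m^2},
\end{equation*}
so the sign of $\phi'(m)$ matches the sign of $\psi(m) \triangleq m\,\frac{g_1'(m)}{g_1(m)} - \log g_1(m)$. Since we want $g_1(m)^{1/m}$ strictly increasing, the goal is to show $\psi(m) > 0$ for all $m \geq 1$. Using the simplified form $g_1(m) = \frac{\beta-\lambda_k}{(\beta-\lambda_k)+m\lambda_k}$, I have $\log g_1(m) = \log(\beta-\lambda_k) - \log\bigl((\beta-\lambda_k)+m\lambda_k\bigr)$, whose derivative is $\frac{g_1'(m)}{g_1(m)} = -\frac{\lambda_k}{(\beta-\lambda_k)+m\lambda_k}$. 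Substituting, $\psi(m) = -\frac{m\lambda_k}{(\beta-\lambda_k)+m\lambda_k} + \log\bigl(1 + \frac{m\lambda_k}{\beta-\lambda_k}\bigr)$.

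The main obstacle, then, is the elementary inequality $\psi(m) > 0$, i.e. showing $\log(1+u) > \frac{u}{1+u}$ for the substitution $u \triangleq \frac{m\lambda_k}{\beta-\lambda_k} > 0$; note $\frac{m\lambda_k}{(\beta-\lambda_k)+m\lambda_k} = \frac{u}{1+u}$. This is a standard fact: defining $h(u) = \log(1+u) - \frac{u}{1+u}$, one has $h(0)=0$ and $h'(u) = \frac{1}{1+u} - \frac{1}{(1+u)^2} = \frac{u}{(1+u)^2} > 0$ for $u > 0$, so $h(u) > 0$ for all $u > 0$. Hence $\psi(m) > 0$, giving $\phi'(m) > 0$ and so $\phi$ is strictly increasing; exponentiating (as $\exp$ is strictly increasing) yields that $g_1(m)^{1/m} = e^{\phi(m)}$ is strictly increasing for $m \geq 1$, which completes the proof. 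I expect the only real care needed is handling the continuous-versus-integer distinction — since strict monotonicity of the continuous extension on $[1,\infty)$ immediately implies it for integer $m$, this is not a genuine difficulty.
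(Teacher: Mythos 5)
Your proof is correct and follows essentially the same route as the paper's: both simplify $g_1(m)$ to $\frac{\beta-\lambda_k}{(\beta-\lambda_k)+m\lambda_k}$ and reduce strict monotonicity of $g_1(m)^{1/m}$ to showing that $\frac{1}{x}\log(1+ux)$ is strictly decreasing for $u=\frac{\lambda_k}{\beta-\lambda_k}>0$. The only difference is that the paper asserts this last monotonicity as "not hard to see," whereas you verify it explicitly via the inequality $\log(1+u)>\frac{u}{1+u}$, which is a welcome filling-in of the omitted step rather than a new approach.
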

\begin{proof}
Define $T(m)\triangleq \frac{1}{m}\ln(1/ g_1(m))$. We will show that $T(m)$ is strictly decreasing for $m\geq 1$, which is tantamount to showing that $g_1(m)^{\frac{1}{m}}$ is strictly increasing over $m\geq 1$. For more compact notation, let's define $\tau\triangleq\frac{\beta-\lambda_k}{\beta}$, and $\bar{\tau}=1-\tau$. First note that, after straightforward simplification, $g^*(m)=\frac{\tau}{\tau+\bar{\tau}m}$. Hence, $T(m)=\frac{1}{m}\ln(1+u m),$ where $u\triangleq \frac{\bar{\tau}}{\tau}=\frac{\lambda_k}{\beta-\lambda_k}>0$.  Now, it is not hard to see that $T(m)$ is strictly decreasing since the function $\frac{1}{x}\ln(1+ u x)$ is strictly decreasing in $x$ as long as $u>0$. 
\end{proof}

\begin{lemma}\label{lem:proof-opt-batch2}
$g_1(m)\leq g_2(m)$, for all $m\geq 1.$
\end{lemma}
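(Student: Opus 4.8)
The plan is to reduce the inequality $g_1(m)\le g_2(m)$ to an equivalent polynomial inequality and then recognize the resulting expression as a perfect square. Writing out the two definitions, $g_1(m)\le g_2(m)$ is equivalent, after subtracting both sides from $1$ (which reverses the inequality), to
\[
\frac{m\lambda_k}{\beta + (m-1)\lambda_k} \;\ge\; 4\,\frac{m(m-1)\lambda_1\lambda_k}{\bigl(\beta + (m-1)(\lambda_1+\lambda_k)\bigr)^2}.
\]
Since $m\ge 1$, $\lambda_1\ge\lambda_k>0$, and $\beta>0$, both denominators are strictly positive and the common factor $m\lambda_k$ appearing on each side is positive. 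First I would cancel this factor and clear the positive denominators; this preserves the inequality direction and turns the claim into
\[
\bigl(\beta + (m-1)(\lambda_1+\lambda_k)\bigr)^2 \;\ge\; 4(m-1)\lambda_1\bigl(\beta + (m-1)\lambda_k\bigr).
\]

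Next I would set $s\triangleq m-1\ge 0$ and compute the difference of the two sides. Expanding $\bigl(\beta+s(\lambda_1+\lambda_k)\bigr)^2$ and subtracting $4s\lambda_1(\beta+s\lambda_k)$, the cross terms in $\beta s\lambda_1$ and in $s^2\lambda_1\lambda_k$ partially cancel, and the difference collapses to
\[
\beta^2 - 2\beta s(\lambda_1-\lambda_k) + s^2(\lambda_1-\lambda_k)^2 \;=\; \bigl(\beta - s(\lambda_1-\lambda_k)\bigr)^2 \;\ge\; 0,
\]
which establishes the inequality for all $m\ge 1$. This factorization into a perfect square is the crux of the argument; everything else is routine bookkeeping.

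The only real care point — and the closest thing to an obstacle — is justifying the sign manipulations: I must confirm that the denominators never vanish and that dividing out $m\lambda_k$ keeps the inequality direction, both of which follow from the standing assumptions $\beta>0$ and $\lambda_1\ge\lambda_k>0$ together with $m\ge 1$. As a sanity check I would verify the endpoints: at $m=1$ (so $s=0$) the square equals $\beta^2$, matching $g_1(1)=1-\lambda_k/\beta\le 1=g_2(1)$; and equality holds exactly when $\beta=s(\lambda_1-\lambda_k)$, i.e.\ at $m=\frac{\beta}{\lambda_1-\lambda_k}+1=m^*$, which is consistent with the fact that the two branches of $g^*$ in Theorem~\ref{thm:opt-step-size} agree at that boundary.
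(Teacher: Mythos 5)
Your proof is correct and takes essentially the same route as the paper: both clear the positive denominators to reduce the claim to a quadratic inequality in $m-1$ and recognize the difference as the perfect square $\bigl(\beta-(m-1)(\lambda_1-\lambda_k)\bigr)^2$. Your extra care about sign manipulations and the observation that equality occurs exactly at the boundary $m=\frac{\beta}{\lambda_1-\lambda_k}+1$ are sound additions, but the underlying argument is identical.
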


\begin{proof}
Proving the lemma is equivalent to proving $\frac{4m(m-1)\lambda_1\lambda_k}{(\beta+(m-1)(\lambda_1+\lambda_k)^2)}<\frac{m\lambda_k}{\beta+(m-1)\lambda_k}$. After direct manipulation, this is equivalent to showing that 
$$(m-1)^2(\lambda_1-\lambda_k)^2-2(m-1)\beta(\lambda_1-\lambda_k)+\beta^2\geq 0,$$
which is true for all $m$ since the left-hand side is a complete square: $\left((m-1)(\lambda_1-\lambda_k)-\beta\right)^2$.
\end{proof}

Given these two simple lemmas, observe that 
\begin{align}
g^*(1)&=g_1(1)\leq g_1(m)^{\frac{1}{m}}=g^*(m)^{\frac{1}{m}},\quad \text{ for all} ~m\in [1, \frac{\beta}{\lambda_1-\lambda_k}+1],\label{put-lemmas-tog1}
\end{align}
where the first and last equalities follow from the fact that $g_1(m)=g^*(m)$ for $m \in [1, \frac{\beta}{\lambda_1-\lambda_k}+1]$, and the second inequality follows from Lemma~\ref{lem:proof-opt-batch1}. Also, observe that 
\begin{align}
g^*(1)&=g_1(1)\leq g_1(m)^{\frac{1}{m}}\leq g_2(m)^{\frac{1}{m}} = g^*(m)^{\frac{1}{m}},\quad \text{ for all} ~m > \frac{\beta}{\lambda_1-\lambda_k}+1,\label{put-lemmas-tog2}
\end{align}
where the third inequality follows from Lemma~\ref{lem:proof-opt-batch2}, and the last equality follows from the fact that $g_2(m)=g^*(m)$ for $m > \frac{\beta}{\lambda_1-\lambda_k}+1.$ Putting (\ref{put-lemmas-tog1}) and (\ref{put-lemmas-tog2}) together, we have $g^*(1)\leq g^*(m),$ for all $m\geq 1$, which completes the proof.


\section{Experiments: Comparison of Train and Test losses}
\label{sec:expr-test}
\begin{figure}[!ht]
  \vspace{-3mm}
  \centering 
  \begin{minipage}[b]{0.28\textwidth}
    \includegraphics[width=\textwidth]
{fig/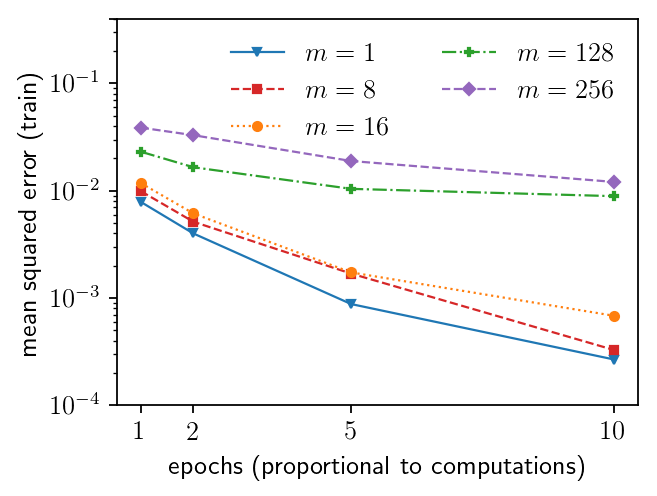}  
    \subcaption*{~~~~~~MNIST (train)}
  \end{minipage}
  \hfill
  \begin{minipage}[b]{0.28\textwidth}
    \includegraphics[width=\textwidth]
{fig/mbs/hint_10k_elap_ep_tr.png}
    \subcaption*{~~~~~~HINT-S (train)}
  \end{minipage}
  \hfill
  \begin{minipage}[b]{0.28\textwidth}
    \includegraphics[width=\textwidth]
{fig/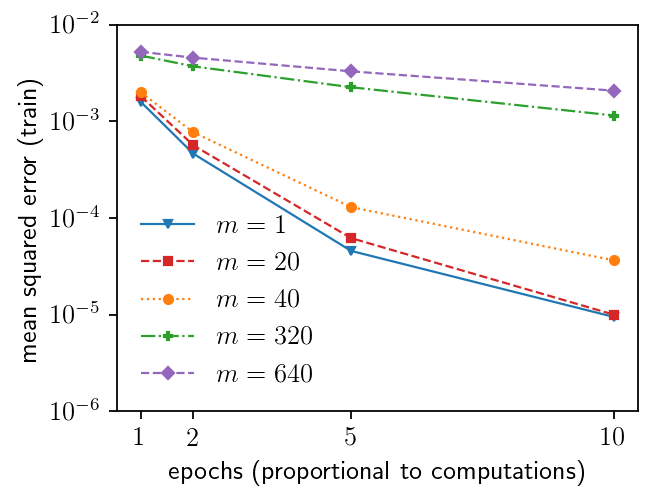}
    \subcaption*{~~~~~~TIMIT (train)}
  \end{minipage}
  \\ 
   \begin{minipage}[b]{0.28\textwidth}
    \includegraphics[width=\textwidth]
{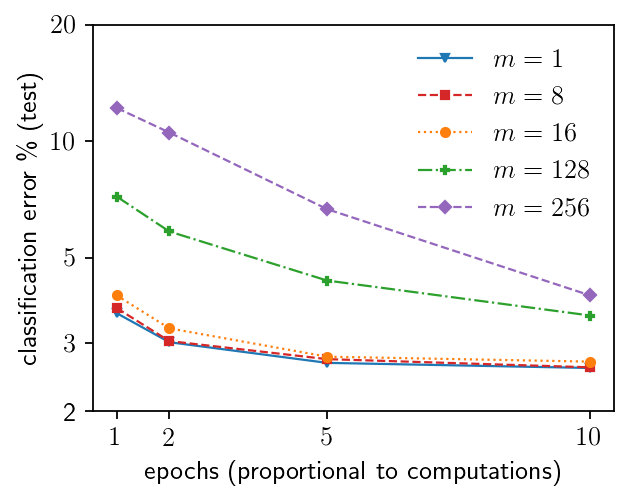}
    \subcaption{MNIST (test)}
  \end{minipage}
  \hfill
  \begin{minipage}[b]{0.28\textwidth}
    \includegraphics[width=\textwidth]
{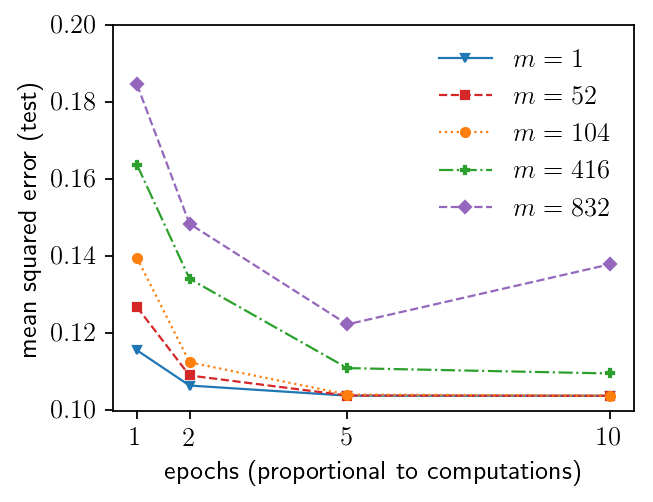}
    \subcaption{HINT-S (test)}
  \end{minipage}
  \hfill
  \begin{minipage}[b]{0.28\textwidth}
    \includegraphics[width=\textwidth]
{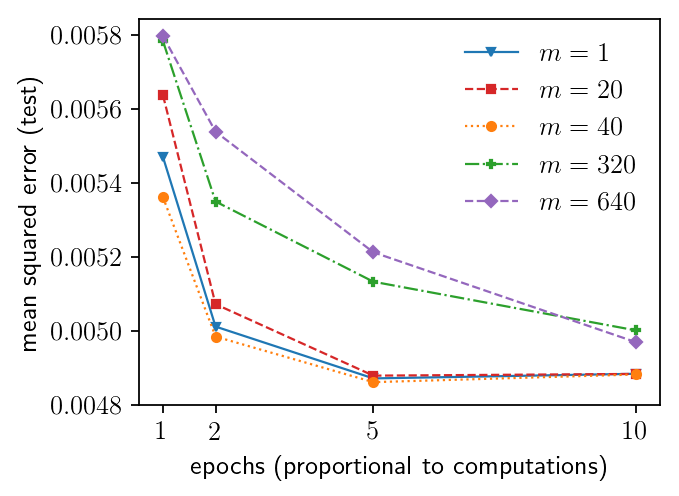}
    \subcaption{TIMIT (test)}
  \end{minipage}
  \caption{Comparison of training error ($n = 10^4$) and testing error for different mini-batch sizes ($m$) vs. number of epochs (proportional to computation, note for $n$ data points, $n \cdot N_{epoch}= m \cdot N_{iter} $)}
  \vspace{-5mm}
\end{figure}

\end{document}